\theoremstyle{plain}
\newtheorem{theorem}{Theorem}[section]
\newtheorem{proposition}[theorem]{Proposition}
\newtheorem{corollary}[theorem]{Corollary}
\theoremstyle{definition}
\newtheorem{definition}[theorem]{Definition}
\newtheorem{assumption}[theorem]{Assumption}
\theoremstyle{remark}
\title{Dynamic Correction of Erroneous State Estimates via Diffusion Bayesian Exploration}
\author{Yiwei Shi $^1$, Hongnan Ma $^1$, Mengyue Yang $^1$, Cunjia Liu $^2$, Weiru Liu $^1$\\
$^1$ University of Bristol, $^2$ Loughborough University
}
\begin{document}

\maketitle

\begin{abstract}
In emergency response and other high-stakes societal applications, early-stage state estimates critically shape downstream outcomes. Yet, these initial state estimates—often based on limited or biased information—can be severely misaligned with reality, constraining subsequent actions and potentially causing catastrophic delays, resource misallocation, and human harm. Under the stationary bootstrap baseline (zero transition and no rejuvenation), bootstrap particle filters exhibit Stationarity-Induced Posterior Support Invariance (S-PSI), wherein regions excluded by the initial prior remain permanently unexplorable, making corrections impossible even when new evidence contradicts current beliefs. While classical perturbations can in principle break this lock-in, they operate in an always-on fashion and may be inefficient. To overcome this, we propose a diffusion-driven Bayesian exploration framework that enables principled, real-time correction of early state estimation errors. Our method expands posterior support via entropy-regularized sampling and covariance-scaled diffusion. A Metropolis–Hastings check validates proposals and keeps inference adaptive to unexpected evidence. Empirical evaluations on realistic hazardous-gas localization tasks show that our approach matches reinforcement learning and planning baselines when priors are correct. It substantially outperforms classical SMC perturbations and RL-based methods under misalignment, and we provide theoretical guarantees that DEPF resolves S-PSI while maintaining statistical rigor.

\end{abstract}

\addtocontents{toc}{\protect\setcounter{tocdepth}{-1}}

\section{Introduction}

In disaster emergency management, early decisions play a pivotal role in shaping the outcome of a crisis \cite{steel2010bayesian,kruschke2010bayesian}. An initial state estimate—for example, an early guess of a hazardous leak’s location—is often made with scant data under intense time pressure. Such early estimates carry great weight in guiding subsequent actions, yet they are highly vulnerable to error due to uncertainty and human subjective judgment. If the initial assumption is mistaken, the entire response effort can become locked into a false premise, failing to adjust even as new data arrives. This lock-in effect in high-stakes scenarios leads to catastrophic delays, misallocation of resources, and heightened risks to society. The fragility of such initial state assumptions thus poses a serious challenge for social decision-making processes, revealing the need for methods that can rapidly correct initial mistakes in real time. 

One major source of initial state estimation error is the inherent uncertainty and bias in early-stage human assessments. When decision-makers narrow their focus based on incomplete or misleading information, they risk excluding the true state of the world from consideration. Once an erroneous initial state estimate anchors the process, it creates a path-dependent trap: monitoring systems and responders may remain fixated on the wrong location or strategy, even as contradictory evidence mounts. In practice, this means a search algorithm could ignore critical regions outside the presumed hazard zone, or resources might continue to be deployed ineffectively, compounding the crisis’s social and economic impacts. To avoid such outcomes, it is crucial to develop adaptive algorithms that revisit and revise early assumptions as new observations arrive. 

From a robotics and Bayesian inference perspective, the above scenario is essentially a state estimation problem: an agent must form an initial belief (prior) about the latent state (e.g., the hazard’s location) and update this belief as new sensor data arrives. If the initial belief is mis-specified, the goal is to correct this incorrect state estimate in real time by incorporating incoming evidence. For clarity, we use the term initial policy error to denote such a mistaken initial state estimate under prior uncertainty, which our approach aims to correct. In recent years, particle filtering (PF) has become a go-to approach for sequential Bayesian state estimation in these settings \cite{ristic2013particle,gordon1993novel,doucet2001introduction}. PF offers a principled way to update beliefs by fusing sensor data with prior knowledge, making it a representative framework to tackle the problem of erroneous initial state estimates in emergency response and other high-stakes decision domains. However, existing Bayesian filtering methods \cite{fox2003bayesian,smidl2008variational} struggle with erroneous initial state estimates, often failing to recover when the true state lies outside the initial belief. 1) Bootstrap particle filters \cite{candy2007bootstrap,gordon1995bayesian} tend to remain confined to the support of the initial prior, hindering exploration beyond the originally assumed region when operated under a stationary bootstrap baseline with zero transition and no rejuvenation. We formalize this baseline-specific lock-in as Stationarity-Induced Posterior Support Invariance (S-PSI): under the zero-transition, no-rejuvenation baseline, the posterior cannot escape the initial support. Importantly, S-PSI is not an inherent limitation of PF; classical countermeasures (e.g., jittering, roughening) can in principle expand support (we include these as baselines in our experiments). 2) More advanced PF variants,such as auxiliary particle filters \cite{mountney2009bayesian,branchini2021optimized} or filters with optimal proposal distributions, can partially mitigate bias when the true state has a small non-zero prior probability \cite{fox2001kld,douc2005comparison,liu1998sequential,doucet2000sequential,arulampalam2002tutorial}, but they fundamentally cannot handle cases where the true state was assigned zero initial probability. Under the standard Bayesian update, any region with zero prior mass will remain at zero posterior mass indefinitely, meaning the algorithm can never discover a completely excluded possibility under this baseline. 3) Attempts to address these issues by augmenting particle filters have had limited success. Some works inject noise or broaden the prior artificially, and others integrate reinforcement learning (RL) with PF \cite{shi2024autonomous,zhao2022deep,park2022source} to actively guide sensor exploration. While such approaches can improve data collection, they may inherit the same blind spots from a mis-specified prior and often introduce significant complexity and resource demands. Without a new perspective, Bayesian trackers and decision methods remain at risk of locking onto an incorrect initial belief, especially under the S-PSI. 

To overcome these challenges, we propose a novel approach called Diffusion-Enhanced Particle Filtering (DEPF) that dynamically corrects erroneous initial state estimates via a diffusion-driven Bayesian exploration mechanism. The key insight of DEPF is to expand the particle filter’s support in response to observation feedback, allowing the algorithm to break out of the constraints imposed by a flawed initial prior. Instead of passively accepting the prior’s limits, our method systematically injects a small number of exploratory particles into regions outside the currently believed range. This injection is guided by indicators of model inconsistency when incoming sensor data strongly contradicts the filter’s predictions (e.g., high error or entropy). A controlled stochastic diffusion process then spreads these exploratory particles into previously neglected areas, effectively probing the hypothesis that the true state might lie beyond the old bounds. We incorporate a Bayesian validation step to ensure that the expanded support remains statistically coherent. Through this belief-triggered diffusion-and-validation cycle, DEPF augments the PF inference layer and mitigates S-PSI when it arises under the stationary bootstrap baseline. 

The main contributions of this work are as follows:
(1) We identify and formally define the Stationarity-Induced Posterior Support Invariance (S-PSI) under the zero-transition, no-rejuvenation bootstrap baseline, characterizing it as a diagnostic condition rather than a universal PF limitation (\S \ref{subsec:PSI}).
(2) We propose the DEPF framework, a particle filtering method that introduces a principled, belief-triggered technique to dynamically expand inference support beyond initial belief constraints (\S \ref{sec:depf}).
(3) We demonstrate via theory and experiments (hazardous gas leak scenarios) that DEPF can effectively correct initial state estimation errors across different scales and error severities, substantially improving localization and response efficiency over existing methods, including RL/planning baselines and classical perturbations (\S \ref{sec:Experiment}).

\section{Related Work}

In emergency localization scenarios \cite{wu2021research,hite2019bayesian}, \textit{Bayesian filtering} \cite{fox2003bayesian,smidl2008variational,quinlan2009multiple} methods such as the bootstrap particle filter leverage Bayesian inference to iteratively update state estimates but typically assume correctly-specified initial priors, thus \emph{may become ineffective under severely misaligned early assumptions when operated under a stationary bootstrap baseline (zero transition, no rejuvenation), due to} \textit{Stationarity-Induced Posterior Support Invariance (S-PSI)} \cite{gordon1995bayesian,candy2007bootstrap}. \emph{S-PSI is a baseline-specific diagnostic rather than an inherent limitation of PF; classical countermeasures such as jittering/roughening or resample--move can, in principle, expand support, and we include them as baselines in \S \ref{sec:Experiment}.} Advanced particle filter variants, including \textit{auxiliary particle filters} \cite{mountney2009bayesian,branchini2021optimized} and filters using \textit{optimal proposal distributions}, improve proposal quality and sample efficiency but still fail when the initial belief assigns \emph{zero} prior probability to the true state \cite{fox2001kld,arulampalam2002tutorial}, i.e., in the presence of a zero-prior barrier without explicit support expansion. Meanwhile, \textit{information-theoretic methods} like \textit{Infotaxis} \cite{vergassola2007infotaxis}, \textit{Entrotaxis} \cite{hutchinson2018entrotaxis} and \textit{DCEE} \cite{chen2021dual} focus sensor motions on maximizing expected information gain or reducing entropy, yet they typically operate within the belief support induced by the initial prior and thus cannot systematically correct severe prior misalignment in real time without a support-expanding inference layer. More recently, integrated \textit{reinforcement learning and particle filtering (RL-PF)} methods have emerged—e.g., \textit{AGDC} and its variants using KL-divergence or entropy-based intrinsic rewards \cite{shi2024autonomous}, \textit{PC-DQN} \cite{zhao2022deep}, and \textit{GMM-PFRL} \cite{park2022source}. While these RL-driven methods exhibit stronger adaptive exploration, they can still inherit the zero-prior barrier when the underlying filtering layer does not expand support (cf. the S-PSI baseline). In contrast, our proposed \textit{DEPF} explicitly addresses this gap by introducing a belief-triggered, validated support-expansion mechanism that operates at the inference layer, making it complementary to proposal-improvement filters, classical perturbations (jittering/roughening/rejuvenation), and planning/RL controllers, and yielding superior robustness under severely misaligned early assumptions.

\section{Problem Formulation and Preliminaries}
\label{sec:Preliminaries}
\subsection{Problem Setup}
\label{subsec:Problem_Setup}
Consider a two-dimensional spatial domain \(\Omega \subset \mathbb{R}^2\) with a stationary hazardous gas source. We describe the unknown source term by the parameter vector at time step \(k\):
$\Theta_k = [x_s, y_s, q_s, u_s, \phi_s, d_s, \tau_s]^\top \in \mathbb{R}^7$
where \( \boldsymbol{p}_s = (x_s, y_s) \in \Omega \subset \mathbb{R}^2 \) represent the Cartesian coordinates of the source, \( q_s \in \mathbb{R}^{+} \) denotes the scaled release strength, representing the true emission rate adjusted by an unknown sensor calibration factor, \( u_s \in \mathbb{R}^{+} \) and \( \phi_s \in [0, 2\pi) \) represent the wind speed and wind direction respectively, \( d_s \in \mathbb{R}^{+} \) describes the diffusivity of the gas in air, \( \tau_s \in \mathbb{R}^{+} \) indicates the effective lifetime of the gas. At each discrete time step \(k\), a mobile robot equipped with a gas sensor occupies position \(\boldsymbol{p}_k =(x_k,y_k) \in \Omega\) and records a scalar sensor output \( z_k \in \mathbb{R}^+ \), which represents the raw \textit{voltage signal} from the gas sensor. This signal serves as the \textit{observation} in the Bayesian filtering model, linking sensor data to the hidden source state \( \Theta \). The cumulative sensor readings up to step \(k\) are denoted as \( z_{1:k} = \{ z_1, z_2, \dots, z_k \} \). \textcolor{cyan}{\textbf{Our objective}} is to estimate the posterior distribution \( p(\Theta_k \mid z_{1:k}) \) given the observed sensor signals \( z_{1:k} \) (i.e., raw voltage measurements), and robot locations, under the assumption of source stationarity, i.e., \(\Theta_{k+1} = \Theta_k\). To handle nonlinearities and intermittency in sensor readings, a particle filter is adopted to iteratively approximate this posterior.

\subsection{Observation Model}
\label{sec:Ob-model}
We adopt a simplified analytical plume model derived from the advection–diffusion equation to represent gas transport from the source to the sensor location. The expected sensor output at location \(p_k\) under source parameters \(\Theta\) is given by:
$h(p_k; \Theta) = \frac{q_s}{4\pi d_s \|p_k - p_s\|} 
\cdot \exp\left(-\frac{\|p_k - p_s\|}{\lambda}
 -\frac{\psi}{2 d_s} \right)$ with $\psi = (x_k - x_s) u_s \cos\phi_s + (y_k - y_s) u_s \sin\phi_s$ and $\lambda = \sqrt{{d_s \tau_s}/[{1 + ({u_s^2 \tau_s}/{4 d_s})}]}$. 

Since we use a low-cost metal oxide (MOX) sensor, the sensor output is not a calibrated concentration but a voltage value subject to significant uncertainty and miss-detection. Thus, the final measurement model is:
$ z_k = D_k \cdot \big(h(p_k;\Theta) + \bar{v}_k\big) + (1 - D_k)\cdot v_k. $ where \( \bar{v}_k \sim \mathcal{N}(0, \bar{\sigma}_k^2) \) is additive sensor noise, variance $\bar\sigma_k^{2}$, \( v_k \sim \mathcal{N}(0, \sigma_k^2)\) is background noise in clean air, variance $\sigma_k^{2}$, \( D_k \in \{0,1\} \sim \text{Bernoulli}(P_d) \) encodes whether the sensor successfully detects the gas, \(P_d\) is the probability of detection, reflecting turbulence, dilution, or sensor failure. The resulting \textit{Gaussian mixture likelihood function} becomes:
$ p(z_k \mid \Theta) =
(1 - P_d) \cdot \mathcal{N}(z_k;\, 0,\, \sigma_k^2)
+ P_d \cdot \mathcal{N}(z_k;\, h(p_k; \Theta),\, \bar{\sigma}_k^2) $


\subsection{Sequential Particle Filtering}
\label{sec:pf-brief}

Particle filtering offers a non-parametric, sequential Bayes estimator of the posterior
$p(\Theta_k\mid z_{1:k})$.
We represent that posterior by a set of \(N\) weighted particles
\(\{\,\Theta_k^{(i)}, w_{k}^{(i)}\}_{i=1}^{N}\),
$p(\Theta_k\mid z_{1:k})
\;\approx\;
\sum_{i=1}^{N}
w_{k}^{(i)}\,\delta~\!\bigl(\Theta_k-\Theta_k^{(i)}\bigr),
\sum_{i=1}^{N}w_{k}^{(i)}=1$,
with \(\delta(\cdot)\) the Dirac delta. 
In general, particles propagate via a transition kernel
$p(\Theta_k\mid\Theta_{k-1})$ and are sampled from a proposal
$q(\Theta_k\mid \Theta_{k-1}, z_k)$.
The importance weights then update as
$
\tilde w_k^{(i)}
= w_{k-1}^{(i)}\,
\frac{p\!\bigl(z_k \mid \Theta_k^{(i)}\bigr)\,
      p\!\bigl(\Theta_k^{(i)}\mid \Theta_{k-1}^{(i)}\bigr)}
     {q\!\bigl(\Theta_k^{(i)}\mid \Theta_{k-1}^{(i)}, z_k\bigr)},
w_k^{(i)} = \tilde w_k^{(i)} \Big/ \sum_{j=1}^N \tilde w_k^{(j)},
$
with the likelihood $p(z_k\mid\Theta)$ given in \S\ref{sec:Ob-model}.
We trigger resampling when the effective sample size
$\mathrm{ESS}_k=1/\sum_{i=1}^N (w_k^{(i)})^2$ falls below a threshold $\eta$. In the widely used \emph{bootstrap filter}, the proposal equals the transition, $q(\Theta_k\mid \Theta_{k-1}, z_k) = p(\Theta_k\mid \Theta_{k-1})$, so the weight update simplifies to $\tilde w_k^{(i)} = w_{k-1}^{(i)}\,p\!\bigl(z_k\mid \Theta_k^{(i)}\bigr)$. In our setting the source term $\Theta$ is assumed \emph{static during the response horizon}. Without a natural dynamical law, a common and widely used reference method is the \emph{bootstrap filter}, where particles are simply carried forward and only their weights are updated by the likelihood of new sensor data.

\subsection{Stationarity-Induced Posterior Support Invariance (S-PSI)}
\label{subsec:PSI}

The simplicity of the bootstrap filter, while natural under static parameters, reveals a structural vulnerability: particles remain fixed in parameter space and can never leave the initial prior region. Even when new observations strongly contradict the prior, the filter cannot escape this confinement. We formalize this lock-in effect as 
\emph{Stationarity-Induced Posterior Support Invariance (S-PSI)}.

Let the prior be $p_0(\Theta)$ with support
$\mathcal{S}_{\text{prior}} := \operatorname{supp} p_{0}(\Theta) = \{\,\Theta: p_0(\Theta)>0\,\} \subset \mathbb{R}^{7}$.

\begin{assumption}[S0: zero transition, no rejuvenation] \label{assump:S0}
$ p(\Theta_k\mid \Theta_{k-1})
= \delta(\Theta_k-\Theta_{k-1})$
and no rejuvenation step (e.g., jittering, roughening, resample–move) is applied.
\end{assumption}

\begin{proposition}[S-PSI under \ref{assump:S0}]
If particles are initialized within $\mathcal{S}_{\text{prior}}$, then for all $k$,
$\operatorname{supp}\!\bigl(p(\Theta\mid z_{1:k})\bigr)
\subseteq \mathcal{S}_{\text{prior}}.$
In words, the posterior support remains permanently trapped inside the initial prior region. As a direct consequence, if the true source $\Theta^\ast$ lies outside the prior, then $\Theta^\ast \notin \mathcal{S}_{\text{prior}}
\;\Rightarrow\; p(\Theta^\ast \mid z_{1:k}) = 0,\ \forall k ,$ i.e., the filter fundamentally cannot discover it, not due to likelihood mismatch, but simply because no particles ever enter that excluded region.
\end{proposition}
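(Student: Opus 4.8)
The plan is to argue by induction on the time index $k$ that the filtering density $p(\Theta \mid z_{1:k})$ has support contained in $\mathcal{S}_{\text{prior}}$, exploiting the fact that under Assumption~\ref{assump:S0} the predict step acts as the identity on densities while the Bayesian update can only remove, never create, support. The base case is immediate: at $k=0$ the filtering density is the prior $p_0$, whose support is $\mathcal{S}_{\text{prior}}$ by definition.

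For the inductive step, suppose $\operatorname{supp}\!\bigl(p(\Theta \mid z_{1:k-1})\bigr) \subseteq \mathcal{S}_{\text{prior}}$. I would first treat the predict (Chapman--Kolmogorov) step. Writing $f(\Theta) := p(\Theta \mid z_{1:k-1})$ and substituting the Dirac transition kernel of \ref{assump:S0}, the one-step predictive density is $\int \delta(\Theta_k - \Theta_{k-1})\, f(\Theta_{k-1})\, d\Theta_{k-1} = f(\Theta_k)$ by the sifting property, so the predictive density coincides with the previous filtering density and its support is preserved exactly. Next, the Bayesian update gives $p(\Theta_k \mid z_{1:k}) \propto p(z_k \mid \Theta_k)\, f(\Theta_k)$; since this is a pointwise product, any $\Theta$ with $f(\Theta)=0$ forces $p(\Theta \mid z_{1:k})=0$ irrespective of the likelihood value. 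Hence $\operatorname{supp}\!\bigl(p(\Theta \mid z_{1:k})\bigr) \subseteq \operatorname{supp}(f) \subseteq \mathcal{S}_{\text{prior}}$, closing the induction and yielding a nested, monotonically shrinking sequence of supports all contained in $\mathcal{S}_{\text{prior}}$.

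The stated consequence then follows directly: if $\Theta^\ast \notin \mathcal{S}_{\text{prior}}$, then $\Theta^\ast$ lies outside $\operatorname{supp}\!\bigl(p(\Theta \mid z_{1:k})\bigr)$ for every $k$, so $p(\Theta^\ast \mid z_{1:k}) = 0$ for all $k$. I would emphasize that this vanishing is caused by the zero predictive mass carried unchanged through the identity kernel, not by a small or mismatched likelihood factor---precisely the distinction the statement highlights. For the particle representation I would append a parallel one-line argument: the Dirac kernel freezes each particle, $\Theta_k^{(i)} = \Theta_{k-1}^{(i)}$; the weight update only rescales existing weights; and resampling merely duplicates or discards existing atoms. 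No new atoms are ever introduced, so the empirical support stays inside the initial particle set, which was drawn from $\mathcal{S}_{\text{prior}}$.

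The only step requiring real care---and the one I expect to be the main obstacle to full rigor---is the formal handling of the degenerate Dirac kernel in the predict step, since $\delta$ is not a genuine function and the sifting manipulation must be justified properly. I would resolve this either distributionally or, more cleanly, by regarding the kernel as the deterministic identity map on $\mathbb{R}^7$ and pushing forward the measure, for which support invariance is immediate. Everything else reduces to the elementary monotonicity of support under pointwise multiplication.
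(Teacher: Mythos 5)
Your proof is correct and follows essentially the same route as the paper's: induction on $k$, with the degenerate transition preserving support and the pointwise likelihood update only ever shrinking it. Your treatment is in fact somewhat more careful than the paper's own sketch (explicitly handling the Dirac sifting step and adding the parallel particle-level argument), but the underlying argument is identical.
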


\subsection{POMDP formulation for belief‑aware sensor planning}
\begin{wrapfigure}{r}{0.30\linewidth} 
  \centering
  \includegraphics[width=\linewidth]{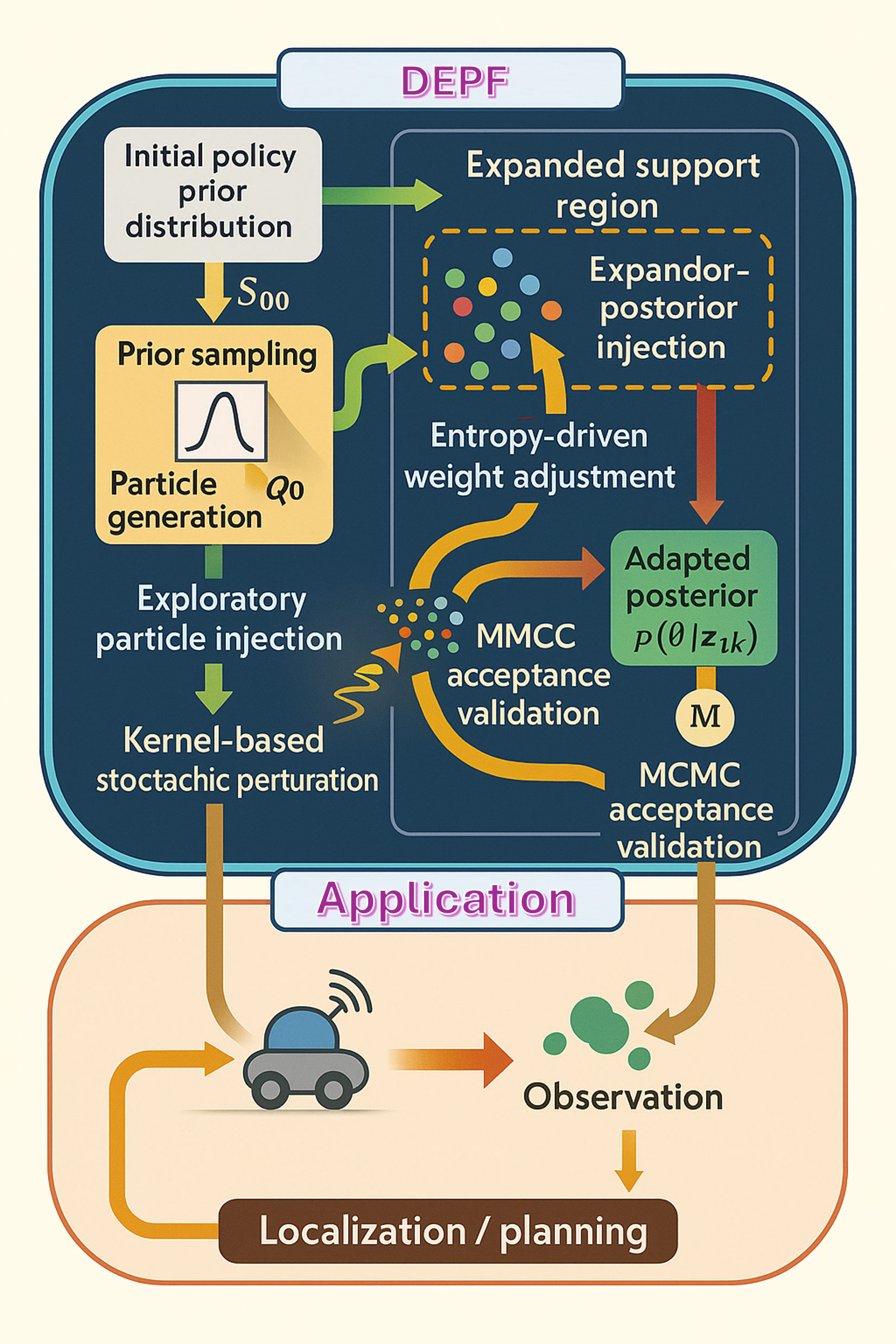}
  \caption{Flowchart of DEPF.}
  \label{fig:Flowchart}
\end{wrapfigure}
The interaction between the mobile robot and the unknown gas plume can be framed as a \emph{partially observableMarkov decision process} (POMDP) \(\mathcal{M}=(\mathcal{S},\mathcal{A},\Omega,T,O,R,\gamma)\), where \(\mathcal{S}\) is the latent state space containing the stationary source vector \(\Theta=(x_s,y_s,q_s,u_s,\phi_s,d_s,\tau_s)^\top\); \(\mathcal{A}\) is the set of motion commands that move the robot in the plane; \(\Omega\) is the observation space, where an observation at time \(k\) is \(o_k=(p_k,z_k)\) with \(p_k\in\mathbb{R}^2\) the robot position and \(z_k\in\mathbb{R}\) the sensor voltage.  The transition kernel factorises as \(T((p',\Theta')\mid(p,\Theta),a)=T_p(p'\mid p,a)\,\delta(\Theta'-\Theta)\): robot kinematics are deterministic while the source parameters remain unchanged.  The observation model is \(O(o_k\mid(p_k,\Theta),a_{k-1})=p(z_k\mid p_k,\Theta)\), where the likelihood \(p(z_k\mid p_k,\Theta)\) is the mixture‑Gaussian plume sensor model of \S \ref{sec:Ob-model}.  At decision time the agent cannot access \(\Theta\) directly, so it reasons with the belief \(b_k=p(\Theta\mid z_{1:k})\) supplied by the particle filter.  We therefore feed the RL policy with the augmented information state \(s^{\text{RL}}_k=(p_k,z_k,b_k)\).  The reward is chosen as the expected one‑step information gain \(R_k=\mathbb{E}_{o_{k+1}}[D_{\mathrm{KL}}(b_{k+1}\,\|\,b_k)]\), encouraging actions that shrink posterior uncertainty, and future rewards are discounted by \(\gamma\in(0,1)\).  The objective is to learn a policy \(\pi^\ast\) that maximises the expected discounted return \(J=\mathbb{E}[\sum_{t=0}^{\infty}\gamma^{t}R_{k+t}]\), thereby steering the robot along paths that are most informative about the hidden source. The details are provided in Appendix \S~\ref{sec:rl-controller}.

\section{Diffusion-Driven Support Range Expansion}
\label{sec:depf}

Particle filtering provides a sequential Bayesian framework with weighted particles approximating the posterior. 
Under the \emph{S-PSI baseline} introduced in \S \ref{sec:PSI}—i.e., a \textbf{zero-transition, no-rejuvenation} bootstrap setting—the posterior support cannot escape the initial prior support $\mathcal{S}_{\text{prior}}$. 
We treat this as a \emph{didactic baseline}, not an inherent limitation of PF. 
To mitigate S-PSI \emph{when it arises}, we propose a diffusion-enhanced correction module (Fig.~\ref{fig:Flowchart}) that \textbf{(i)} injects a small fraction of exploratory particles, \textbf{(ii)} applies covariance-scaled stochastic diffusion, and \textbf{(iii)} validates proposals via Metropolis--Hastings (MH), thereby enabling \emph{minimal-bias}, data-triggered support expansion.

\textbf{Adaptive Diffusion via Exploratory Particles:} At each time step, a subset of particles is designated as \textit{exploratory particles}, which introduce a uniform diffusion process into the framework. These particles are sampled from an adaptively extended bounding region $\mathcal{B}_k$, dynamically adjusted according to the current particle distribution to cover regions beyond the initial prior boundary $\mathcal{S}_{\text{prior}}$: $\mathcal{B}_k = [0, x_{\text{max}} + \delta] \times [0, y_{\text{max}} + \delta]$, where $x_{\text{max}}$ and $y_{\text{max}}$ denote the current maximum particle positions along each spatial dimension and $\delta$ is an adaptively determined margin parameter. Exploratory particles are then uniformly sampled from this expanded bounding region: $\Theta_k^{(j)} \sim \mathcal{U}(\mathcal{B}_k), j \in \mathcal{E}$, where $\mathcal{E}$ represents the indices of exploratory particles. The exploratory particles are initialised with small weights:
$w_k^{(j)} = \frac{\epsilon}{|\mathcal{E}|}, \epsilon \ll 1.$
This mechanism enables the bootstrap filter to sample states outside the original support range $\mathcal{S}_{\text{prior}}$, thereby increasing the likelihood of reaching states $\Theta^* \notin \mathcal{S}_{\text{prior}}$.

\textbf{Entropy-Driven Diffusion Regularisation:} To ensure that the exploratory diffusion does not collapse prematurely, an entropy regularisation term is added during the weight update step. This regularisation diffuses the weights across all particles, encouraging exploration of low-probability regions: $ w_k^{(i)} \leftarrow w_k^{(i)} + \beta \cdot H(w_k)$, where $H(w_k)$ is the entropy of the weight distribution, defined as: $ H(w_k) = -\sum_{i=1}^N w_k^{(i)} \log(w_k^{(i)} + \epsilon)$. The regularisation parameter $\beta$ is adaptively chosen based on the discrepancy between the current entropy and a predefined target entropy $H_{\text{target}}$:
$\beta = \max\left(\beta_{\min}, \min\left(\beta_{\max}, \frac{H_{\text{target}} - H(w_k)}{H_{\text{target}}}\right)\right)$, where $\beta_{\min}$ and $\beta_{\max}$ represent the predefined minimum and maximum regularisation strengths, respectively. By penalising weight distributions that become overly concentrated, this adaptive entropy-based mechanism promotes balanced diffusion across the state space. The diffusion of weights helps exploratory particles retain influence, thus effectively encouraging the discovery and sustained exploration of regions beyond $\mathcal{S}_{\text{prior}}$.

\textbf{Kernel-Induced Stochastic Diffusion:} To further expand the particle support range dynamically, we introduce a stochastic diffusion mechanism based on kernel perturbations. Each particle \( \Theta_k^{(i)} \) is perturbed by a Gaussian kernel that models diffusion within the local neighbourhood:
$\Delta \Theta_k^{(i)} \sim h_{\text{opt}} \cdot \mathcal{L} \cdot \mathcal{N}(0, I)$, where \( h_{\text{opt}} = A \cdot N^{-\frac{1}{n+4}} \) is the optimal kernel bandwidth dynamically adjusted to balance exploration and precision \( \mathcal{L} \) is the lower triangular matrix obtained from the Cholesky decomposition of the covariance matrix \( \Sigma \), ensuring diffusion adapts to the local particle distribution. The covariance matrix \( \Sigma \) is computed dynamically: $\Sigma = \sum_{i=1}^N w_k^{(i)} (\Theta_k^{(i)} - \mu)(\Theta_k^{(i)} - \mu)^T + \lambda I$, where \( \mu = \sum_{i=1}^N w_k^{(i)} \Theta_k^{(i)} \) is the weighted mean, and \( \lambda > 0 \) ensures positive definiteness of \( \Sigma \).

This perturbation mechanism expands the effective support range by introducing stochastic diffusion, allowing particles to explore new regions iteratively:
$\Theta_k^{(i)} \leftarrow \Theta_k^{(i)} + \Delta \Theta_k^{(i)}.$

\textbf{Diffusion-Driven Validation via MCMC:} To ensure consistency with the target posterior distribution, a Metropolis-Hastings acceptance criterion \cite{hastings1970monte} validates the diffused particles. For each perturbed particle \( \Theta_k^{(i)} \), the acceptance probability is: $\alpha_i = \frac{w_{\text{new}}^{(i)}}{w_{\text{old}}^{(i)}} \cdot \exp\left(-\frac{1}{2} \Delta \Theta_k^{(i)^T} \Sigma^{-1} \Delta \Theta_k^{(i)}\right).$
A uniformly sampled random variable \( u_i \sim \mathcal{U}(0, 1) \) determines whether the particle is accepted: $ \Theta_k^{(i)} = \Theta_k^{(i)} - \Delta \Theta_k^{(i)} \cdot \mathbb{I}(\alpha_i < u_i)$, where $\mathbb{I}(\alpha_i < u_i)$ is the indicator function, which equals 1 when $\alpha_i < u_i$ is true, and 0 otherwise. This step ensures that the diffusion-driven expansion aligns with the posterior distribution, preserving the accuracy of the particle filter.

\textbf{Diffusion-Enhanced Particle Filtering:} By integrating exploratory particles, entropy-driven diffusion regularisation, and kernel-induced stochastic perturbations, the proposed framework creates a dynamic diffusion process that iteratively expands the effective support range. The recursive relationship for the support range becomes:
$\mathcal{S}_{k+1} = (\mathcal{S}_k \cup \mathcal{B}) \oplus h_{\text{opt}}$,
where \( \oplus h_{\text{opt}} \) represents kernel-induced stochastic diffusion. This diffusion framework effectively mitigates the posterior support invariance by continuously extending the particle filter's exploration capability, enabling robust state estimation for target states \( \Theta^* \notin \mathcal{S}_{\text{prior}} \). The detailed theoretical analysis and justification of the effectiveness of our proposed method are provided in Appendix \S \ref{appendix:theory_analysis}.

\section{Experiment}
\label{sec:Experiment}
To evaluate the ability of our method to dynamically recover from severe prior misalignment, we conduct experiments using the ISLC environments (ISLCenv), a simulation suite designed for emergency gas leak localization under varying levels of initial policy error. As detailed in ~\S\ref{sec:ISLC_env}, ISLCenv models a multi-source Gaussian plume and simulates noisy sensor observations without explicit reward signals. This setup allows us to rigorously assess the capacity of DEPF and competing baselines to overcome posterior support limitations and adaptively \textcolor{cyan}{\textbf{infer the full 7-D parameter vector $\Theta$}} in real time under realistic operational constraints, \textcolor{cyan}{\textbf{rather than only the source coordinates}}.

\subsection{Evaluation Metrics and  Baseline Algorithms}

To evaluate our proposed approach and compare it against baseline algorithms, we use four distinct metrics: \textit{Operational Completion Efficacy (OCE)}, which measures how frequently emergency response missions meet their goals, with higher scores indicating better deployment effectiveness; \textit{Average Deployment Efficiency (ADE)}, representing the average distance traveled by response units, where shorter distances imply more efficient routing; \textit{Response Execution Velocity (REV)}, quantifying the time duration from deployment to task completion, with faster times signifying more efficient operations; and \textit{Localization Precision Score (LPS)}, assessing the accuracy of source localization by computing the average discrepancy between estimated and actual source locations, with smaller values denoting higher accuracy. Our proposed method is evaluated alongside various baseline algorithms, grouped according to their methodological foundations. The \textcolor{cyan}{\textbf{first group}} merges \textcolor{cyan}{reinforcement learning with Bayesian inference} and includes a single representative, \textit{AGDC} \cite{shi2024autonomous}, which leverages the particle-filter RL posterior and uses intrinsic rewards derived from belief updates to guide exploration. Additionally, we include two other reinforcement learning approaches, \textit{PC-DQN} \cite{zhao2022deep} and \textit{GMM-PFRL} \cite{park2022source}, which independently leverage particle filtering parameters as states for RL training. The \textcolor{cyan}{\textbf{second group}} integrates \textcolor{cyan}{planning and Bayesian inference} approaches, represented by \textit{Infotaxis} \cite{vergassola2007infotaxis}, \textit{Entrotaxis} \cite{hutchinson2018entrotaxis}, and \textit{DCEE} \cite{chen2021dual}. Finally, we include a \textcolor{cyan}{\textbf{third group} of classical SMC perturbation baselines that are not subject to S-PSI constraints}: \textit{PF+Jittering} \cite{liu1998sequential,doucet2000sequential}, \textit{PF+Roughening} \cite{gordon1993novel,gordon1995bayesian}, and \textit{PF+Rejuvenation} \cite{hastings1970monte,doucet2000sequential}. Implementation details and hyperparameter grids are given in \S \ref{sec:baselines} and \ref{app:diffusion-pf}.

\subsection{Scenario Parameterization and Evaluation}

\begin{figure}[htbp]
  \centering
  \includegraphics[width=\linewidth]{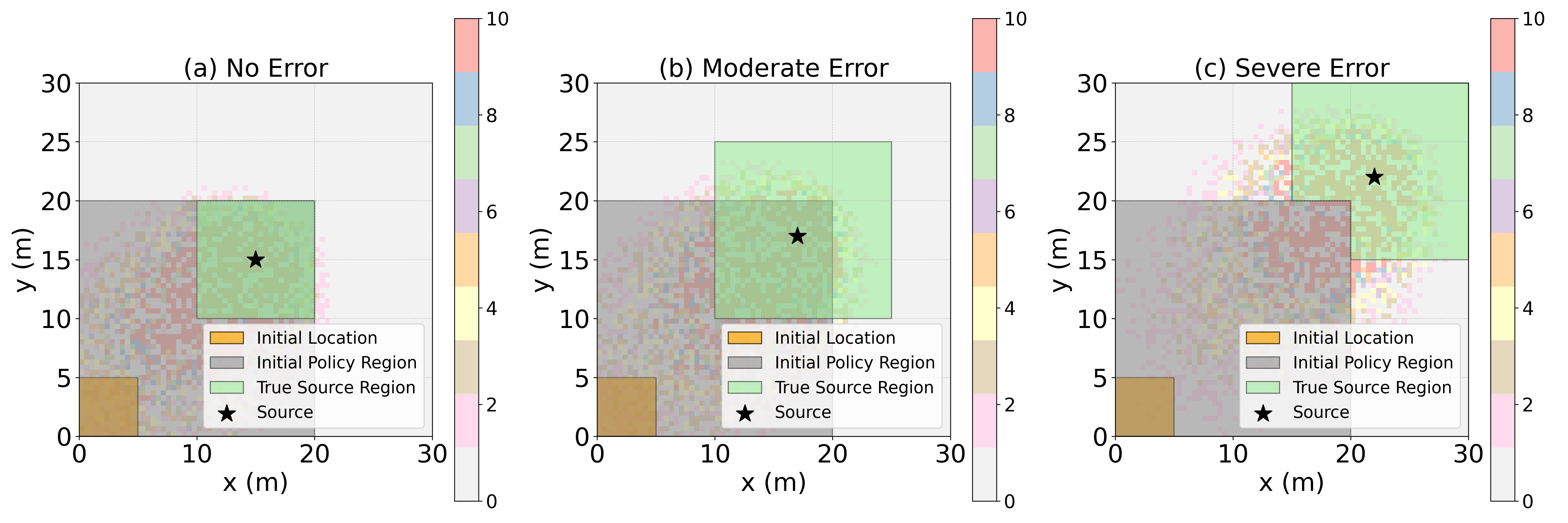}
  \caption{Experimental Scenarios for Policy Errors in Emergency Response.}
  \label{fig:policy_error}
\end{figure}
We evaluate our proposed approach under three distinct scenarios representing different levels of initial policy error made by emergency response planners. In all scenarios, the agents operate in a $30\times30$ spatial domain, and the gas source location, wind speed, and wind direction are sampled according to the distributions specified in Table \ref{tab:training_parameters} of Appendix \S \ref{sec:ISLC_env}. Each training and testing instance uses entirely different plume parameters, with 1000 training and 500 testing instances. Agents start uniformly distributed within the initial subregion $(0,5)\times(0,5)$ and move with unit-length steps.

The three initial estimation error scenarios in Figure~\ref{fig:policy_error} represent increasing levels of prior misalignment between the initial belief and the true disaster source. \textbf{(1) No Error (Ideal)} corresponds to an ideal case where the initial particle distribution fully covers the true source region, providing a best-case baseline without decision uncertainty. \textbf{(2) Moderate Error (Partial Misalignment)} models a realistic situation where the initial assumed region partially overlaps with the true source area, testing whether each method can adapt to moderate initial mistakes. \textbf{(3) Severe Error (Complete Misalignment, PSI)} is the most challenging case, where the initial prior support is entirely disjoint from the true source location, creating a strict zero-prior barrier. This PSI scenario explicitly probes whether an algorithm can expand its posterior support and recover from severely misaligned initial state estimates, a setting in which standard bootstrap particle filters typically fail. The exact spatial layouts and parameter ranges for these three scenarios are provided in the appendix \ref{app:scenario-spec}.

\begin{table}[!t]
\centering
\caption{Performance Comparison under Different Initial Policy Errors and Environment Scales}
\label{tab:policy_error_experiments}
\resizebox{\textwidth}{!}{%
\begin{tabular}{clcccccc}
\toprule
\textbf{Metric} & \textbf{Method} & \multicolumn{3}{c}{\textbf{Small-scale Environment (1:30)}} & \multicolumn{3}{c}{\textbf{Large-scale Environment (1:300)}} \\
\cmidrule(lr){3-5} \cmidrule(lr){6-8}
& & No Error & Moderate Error & Severe Error & No Error & Moderate Error & Severe Error \\
\midrule
\multirow{10}{*}{OCE$\uparrow$} 
& DEPF (ours) & \textbf{0.90±0.03} & \textbf{0.90±0.03} & \textbf{0.89±0.03} & \textbf{0.90±0.03} & \textbf{0.90±0.03} & \textbf{0.88±0.03} \\
\cmidrule{2-8}
& AGDC & 0.90±0.03 & 0.45±0.04 & $<$ 0.05 & 0.87±0.04 & 0.42±0.05 & $<$ 0.05 \\
& PC-DQN & 0.80±0.04 & 0.40±0.04 & $<$ 0.05 & 0.79±0.04 & 0.38±0.04 & $<$ 0.05 \\
& GMM-PFRL & 0.80±0.04 & 0.40±0.04 & $<$ 0.05 & 0.77±0.04 & 0.35±0.05 & $<$ 0.05 \\
\cmidrule{2-8}
& Infotaxis & 0.85±0.04 & 0.25±0.02 & $<$ 0.05 & $<$ 0.05 & $<$ 0.05 & $<$ 0.05 \\
& Entrotaxis & 0.26±0.02 & 0.13±0.02 & $<$ 0.05 & $<$ 0.05 & $<$ 0.05 & $<$ 0.05 \\
& DCEE & 0.62±0.03 & 0.21±0.03 & $<$ 0.05 & $<$ 0.05 & $<$ 0.05 & $<$ 0.05 \\
\cmidrule{2-8}
& PF+Jittering    & 0.88±0.03 & 0.40±0.04 & 0.06±0.02 & 0.85±0.04 & 0.36±0.05 & $<$ 0.05 \\
& PF+Roughening   & 0.89±0.03 & 0.48±0.04 & 0.10±0.03 & 0.86±0.04 & 0.44±0.05 & 0.08±0.03 \\
& PF+Rejuvenation & 0.89±0.03 & 0.52±0.04 & 0.16±0.03 & 0.86±0.04 & 0.48±0.05 & 0.12±0.03 \\

\midrule
\multirow{10}{*}{ADE$\downarrow$} 
& DEPF (ours) & \textbf{19±0.8} & \textbf{22±1.2} & \textbf{27±1.7} & \textbf{167±15} & \textbf{200±10} & \textbf{255±15} \\
\cmidrule{2-8}
& AGDC & 18±0.9 & 59±11 & timeout & 168±15 & 235±20 & timeout \\
& PC-DQN & 20±1.0 & 60±11 & timeout & 193±16 & 246±28 & timeout \\
& GMM-PFRL & 19±0.9 & 60±20 & timeout & 200±16 & 250±36 & timeout \\
\cmidrule{2-8}
& Infotaxis & 40±2.0 & 70±30 & timeout & timeout & timeout & timeout \\
& Entrotaxis & 50±2.5 & 75±25 & timeout & timeout & timeout & timeout \\
& DCEE & 45±2.2 & 55±3.5 & timeout & timeout & timeout & timeout \\
\cmidrule{2-8}
& PF+Jittering    & 22±1.0 & 65±8  & timeout & 180±15 & 250±22 & timeout \\
& PF+Roughening   & 21±0.9 & 55±6  & timeout & 175±15 & 235±20 & timeout \\
& PF+Rejuvenation & 21±0.9 & 50±5  & 90±12  & 170±14 & 225±18 & 285±35 \\

\midrule
\multirow{10}{*}{REV$\downarrow$} 
& DEPF (ours) & \textbf{0.10±0.05} & \textbf{0.10±0.05} & \textbf{0.10±0.05} & \textbf{0.10±0.05} & \textbf{0.10±0.05} & \textbf{0.10±0.05} \\
\cmidrule{2-8}
& AGDC & 0.10±0.05 & 0.10±0.05 & 0.10±0.05 & 0.12±0.05 & 0.40±0.15 & 1.50±0.30 \\
& PC-DQN & 0.12±0.05 & 0.12±0.05 & 0.12±0.05 & 0.15±0.07 & 0.45±0.17 & 1.50±0.30 \\
& GMM-PFRL & 0.11±0.04 & 0.11±0.04 & 0.11±0.04 & 0.13±0.06 & 0.42±0.16 & 1.50±0.30 \\
\cmidrule{2-8}
& Infotaxis & 1.30±0.06 & 1.30±0.06 & 1.30±0.06 & 1.80±0.08 & 3.00±0.15 & 4.00±0.20 \\
& Entrotaxis & 1.30±0.05 & 1.30±0.05 & 1.30±0.05 & 1.60±0.08 & 2.80±0.14 & 4.50±0.22 \\
& DCEE & 1.30±0.05 & 1.30±0.05 & 1.30±0.05 & 1.50±0.07 & 2.50±0.12 & 4.20±0.21 \\
\cmidrule{2-8}
& PF+Jittering    & 0.11±0.05 & 0.30±0.12 & 1.80±0.35 & 0.12±0.05 & 0.90±0.25 & 2.80±0.45 \\
& PF+Roughening   & 0.11±0.05 & 0.22±0.10 & 1.60±0.30 & 0.12±0.05 & 0.70±0.20 & 2.40±0.40 \\
& PF+Rejuvenation & 0.11±0.05 & 0.20±0.08 & 1.20±0.25 & 0.12±0.05 & 0.60±0.18 & 2.00±0.35 \\

\midrule
\multirow{10}{*}{LPS$\downarrow$} 
& DEPF (ours) & \textbf{0.20±0.01} & \textbf{0.20±0.01} & \textbf{0.20±0.02} & \textbf{0.20±0.01} & \textbf{0.20±0.01} & \textbf{0.20±0.01} \\
\cmidrule{2-8}
& AGDC & 0.20±0.01 & 2.60±0.03 & 12.50±0.15 & 0.20±0.01 & 2.60±0.03 & 12.60±0.15 \\
& PC-DQN & 0.23±0.02 & 2.64±0.04 & 12.60±0.18 & 0.23±0.02 & 2.65±0.04 & 12.70±0.19 \\
& GMM-PFRL & 0.25±0.01 & 2.65±0.03 & 12.55±0.16 & 0.25±0.01 & 2.68±0.03 & 12.57±0.16 \\
\cmidrule{2-8}
& Infotaxis & 0.60±0.03 & 3.30±0.03 & 12.50±0.20 & 0.62±0.03 & 3.32±0.03 & 12.70±0.20 \\
& Entrotaxis & 0.70±0.03 & 3.40±0.03 & 13.00±0.23 & 0.72±0.03 & 3.42±0.03 & 13.30±0.23 \\
& DCEE & 0.65±0.03 & 3.55±0.04 & 12.80±0.22 & 0.68±0.03 & 3.58±0.05 & 13.10±0.23 \\
\cmidrule{2-8}
& PF+Jittering    & 0.26±0.02 & 3.20±0.12 & 11.8±0.4 & 0.28±0.02 & 3.40±0.15 & 12.0±0.5 \\
& PF+Roughening   & 0.24±0.02 & 2.90±0.10 & 10.5±0.4 & 0.26±0.02 & 3.10±0.12 & 11.0±0.5 \\
& PF+Rejuvenation & 0.23±0.02 & 2.70±0.09 &  9.0±0.3 & 0.24±0.02 & 2.90±0.10 & 10.0±0.4 \\

\bottomrule
\end{tabular}%
}
\end{table}

\subsection{Performance Analysis Across Error Conditions and Environment Scales}

\begin{figure}[!t]
  \centering
  \includegraphics[width=\linewidth]{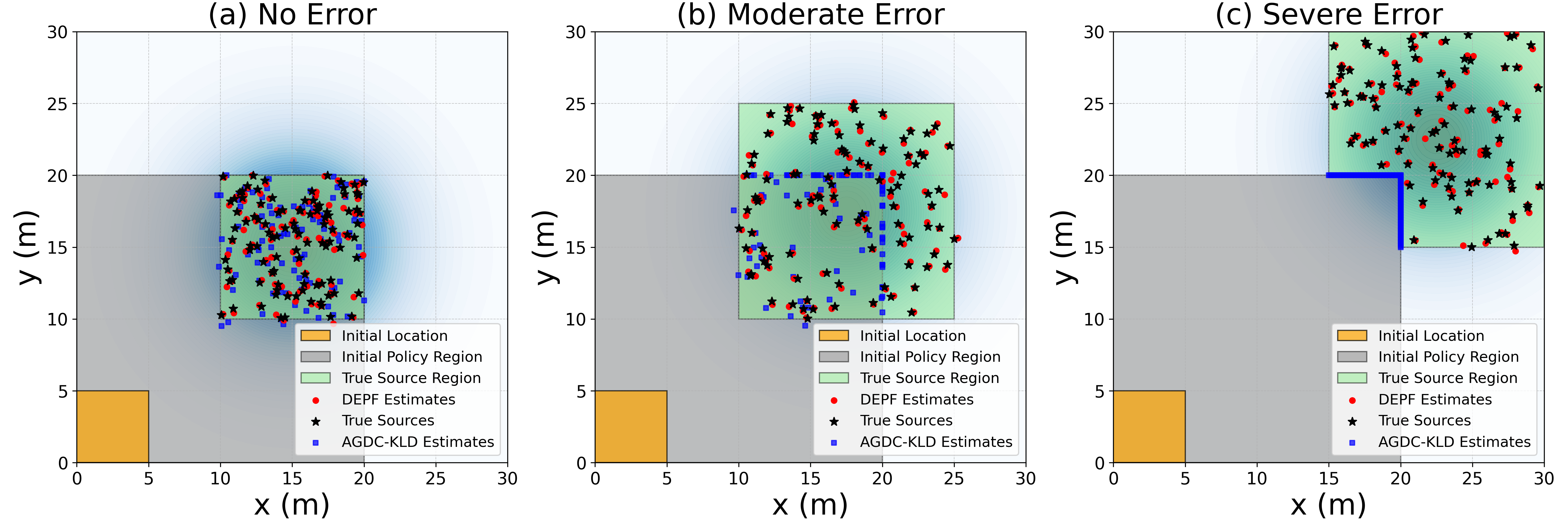}
  \caption{Visualization of Policy Error Scenarios in Emergency Response.}
  \label{fig:policy_estimates}
\end{figure}

We conduct a detailed evaluation of DEPF and nine baseline algorithms under varying levels of initial policy error—\textit{No Error}, \textit{Moderate Error}, and \textit{Severe Error}—each tested in both \textit{small-scale} (agent-domain ratio 1:30) and \textit{large-scale} (1:300) environments. Performance is assessed using four metrics: OCE, ADE, LPS, and REV. Quantitative results are presented in Table~\ref{tab:policy_error_experiments}, with spatial illustrations provided in Figure~\ref{fig:policy_estimates}, which visualizes the posterior estimates from 100 test trials of DEPF and AGDC under each policy error condition.

\textbf{No Error (Ideal).} When the initial prior support fully covers the true source region, all methods perform well. DEPF attains \textbf{OCE = 0.90}, \textbf{LPS = 0.20}, \textbf{ADE = 19} (small-scale) and \textbf{ADE = 167} (large-scale), with \textbf{REV = 0.10} in both scales. AGDC closely matches DEPF on all metrics (\textbf{OCE = 0.90/0.87}, \textbf{LPS = 0.20}, \textbf{ADE = 18/168}). RL baselines PC-DQN and GMM-PFRL also converge (OCE $\approx$ 0.80), while information-theoretic planners show varying success (e.g., Infotaxis achieves OCE $0.85$ in small-scale but degrades in large-scale). The three classical SMC perturbation baselines (PF+Jittering/Roughening/Rejuvenation) behave similarly or slightly worse than DEPF due to always-on diffusion incurring mild over-exploration cost in ideal conditions. \textbf{Moderate Error (Partial misalignment).} With partial overlap between prior support and ground truth, DEPF maintains high performance: \textbf{OCE = 0.90}, \textbf{LPS = 0.20}, \textbf{ADE = 22} (small) and \textbf{ADE = 200}, \textbf{REV = 0.10} (large). AGDC degrades substantially (\textbf{OCE = 0.45/0.42}, \textbf{LPS = 2.60}, \textbf{ADE = 59/235}) with increased timeouts at larger scale. PC-DQN and GMM-PFRL show similar or worse drops (OCE $\approx$ 0.40). Among planners, Infotaxis/Entrotaxis/DCEE underperform markedly (see Table). Notably, the classical PF perturbation baselines recover \emph{part} of the gap relative to RL/planning baselines: PF+Rejuvenation $>$ PF+Roughening $>$ PF+Jittering in OCE/LPS, consistent with resample--move providing the strongest rejuvenation. However, all three remain \emph{well below} DEPF in both accuracy and efficiency, especially in the large-scale setting where path lengths and timeouts increase. \textbf{Severe Error (Complete misalignment; S‑PSI test).} This scenario creates a strict zero-prior barrier: the initial prior support is disjoint from the true source region. DEPF remains stable with \textbf{OCE = 0.89} (small) and \textbf{0.88} (large), \textbf{LPS = 0.20}, and \textbf{ADE = 27/255}—all below the 100/300 step timeout thresholds and with \textbf{REV = 0.10}. In contrast, \emph{bootstrap PF under the S‑PSI baseline} (zero transition, no rejuvenation) cannot escape the prior support; accordingly, AGDC, PC-DQN, GMM-PFRL, and information-theoretic planners collapse (\textbf{OCE} $< 0.05$, \textbf{LPS} $>12.5$; frequent timeouts). The classical SMC perturbation baselines alleviate this limitation to a degree: PF+Rejuvenation achieves the highest non-zero success among the three (occasionally localizing in the small-scale severe case), followed by PF+Roughening and then PF+Jittering; nevertheless, their success rates remain low and many runs still timeout—especially in the large-scale severe case—highlighting the cost of always-on diffusion and the absence of belief-triggered control. \textbf{Scaling with domain size.} Across error conditions, most baselines suffer pronounced degradation when moving from 1:30 to 1:300: OCE drops, ADE/REV increase, and timeouts spike, reflecting insufficient long-range exploration when prior guidance is misleading. DEPF sustains \textbf{OCE = 0.88--0.90} with \textbf{LPS = 0.20} and \textbf{REV = 0.10} across scales, indicating robust scalability. Its advantage is clearest under Moderate/Severe misalignment, where belief-triggered support expansion and covariance-aligned diffusion provide minimal-bias exploration precisely when needed, in contrast to always-on perturbations. \textbf{Takeaways.} (i) In ideal settings, DEPF is competitive with the best baselines. (ii) Under partial misalignment, DEPF preserves accuracy/efficiency while RL/planning baselines and classical PF perturbations degrade, with PF+Rejuvenation the strongest among the latter yet still below DEPF. (iii) Under complete misalignment (S‑PSI test), DEPF consistently sustains high OCE and low LPS at both scales; all other methods either fail or achieve only occasional recovery (PF+Rejuvenation) with substantial timeouts. These results substantiate DEPF as a principled, data-triggered fallback when early-stage priors are unreliable.

\subsection{Ablation and Sensitivity Studies}
\label{sec:ablation}

We perform a series of ablations and sensitivity studies to understand how different components and hyperparameters of DEPF affect performance under the \textit{Severe Error} setting, where the true source lies entirely outside the initial prior support.

\begin{figure}[htbp]
  \centering
  \includegraphics[width=\linewidth]{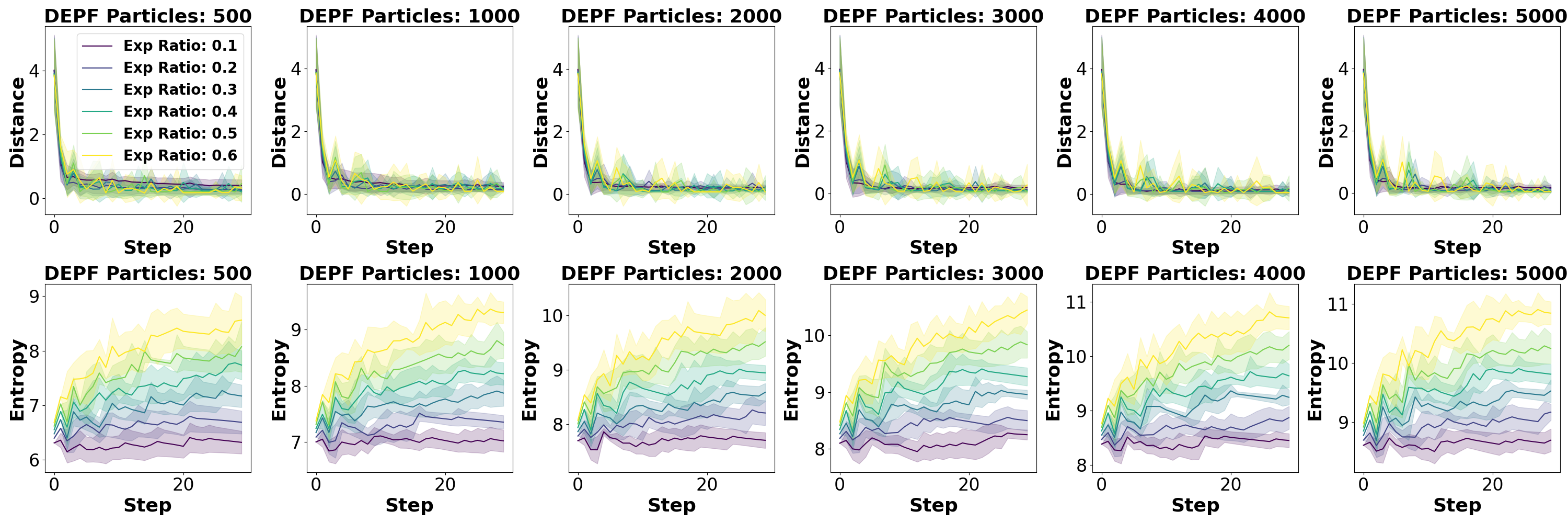}
  \caption{Impact of the number of particles and $\delta$ on DEPF performance under \textit{Severe Error}.}
  \label{fig:Impact_DEPF}
\end{figure}

\textbf{Sensitivity to particle number and margin parameter $\delta$.} We first analyze how DEPF’s performance is affected by the number of particles and the support margin ratio $\delta$, which determines the proportion by which the posterior support region is expanded beyond the initial prior boundary at each step. As shown in Figure~\ref{fig:Impact_DEPF}, increasing $\delta$ generally allows the model to converge more quickly—reflected in lower belief-to-goal distances—because particles are better able to explore outside the constrained prior region. However, this broader support also leads to higher posterior entropy, as particles become more dispersed and the belief distribution less concentrated. This trade-off is particularly pronounced at low particle counts (e.g., $N\!=\!500$–2000), where large $\delta$ causes belief imprecision. When using larger particle sets ($N \geq 3000$), DEPF maintains both fast convergence and acceptable entropy levels even at higher $\delta$, indicating that sufficient particle resolution can stabilize exploration-induced uncertainty. Overall, a moderate support ratio ($\delta \in [0.2, 0.4]$) paired with adequate particle numbers achieves a good balance between exploration and localization precision. These results reinforce DEPF’s ability to recover from severely misaligned initial state estimates by dynamically expanding support and refining the belief.

\begin{table}[hbtp]
\centering
\caption{Component-wise ablation of DEPF under \textit{No Error} and \textit{Severe Error}.}
\label{tab:ablation-components}
\resizebox{\textwidth}{!}{
\begin{tabular}{ccccccccccccc}
\toprule
\multirow{2}{*}{Idx} & \multicolumn{3}{c}{Components} & 
\multicolumn{4}{c}{No Error} & \multicolumn{4}{c}{Severe Error} \\
\cmidrule(lr){2-4}\cmidrule(lr){5-8}\cmidrule(lr){9-12}
& Entropy & Stoch.\ Diffusion & MH Valid. &
OCE & ADE & REV & LPS &
OCE & ADE & REV & LPS \\
\midrule
1 & $\checkmark$ & $\checkmark$ & $\checkmark$ &
$0.90\!\pm\!0.03$ & $19\!\pm\!0.8$ & $0.10\!\pm\!0.05$ & $0.20\!\pm\!0.01$ &
$0.88\!\pm\!0.03$ & $27\!\pm\!1.7$ & $0.10\!\pm\!0.05$ & $0.20\!\pm\!0.01$ \\
2 & $\times$ & $\checkmark$ & $\checkmark$ &
$0.90\!\pm\!0.03$ & $19\!\pm\!0.8$ & $0.10\!\pm\!0.05$ & $0.20\!\pm\!0.01$ &
$0.62\!\pm\!0.17$ & $34\!\pm\!2.4$ & $0.10\!\pm\!0.05$ & $3.00\!\pm\!0.45$ \\
3 & $\checkmark$ & $\times$ & $\checkmark$ &
$0.90\!\pm\!0.03$ & $19\!\pm\!0.8$ & $0.10\!\pm\!0.05$ & $0.20\!\pm\!0.01$ &
$0.00$ & \texttt{timeout} & $0.10\!\pm\!0.05$ & $12.50\!\pm\!0.15$ \\
4 & $\checkmark$ & $\checkmark$ & $\times$ &
\multicolumn{4}{c}{\texttt{not work}} &
\multicolumn{4}{c}{\texttt{not work}} \\
\bottomrule
\end{tabular}}
\end{table}

\textbf{Component-wise ablation.} We next disentangle the contribution of each DEPF component. The component-wise ablation in Table~\ref{tab:ablation-components} shows that all three mechanisms—entropy-based weight regularization, covariance-scaled stochastic diffusion, and MH-based acceptance—are jointly necessary for robust recovery. Removing stochastic diffusion collapses performance under \textit{Severe Error}, yielding timeouts and large residual errors despite correct behavior in the \textit{No Error} setting. Disabling entropy regularization substantially lowers OCE and increases LPS (e.g., $0.62$ OCE and $\sim\!3.0$ LPS), indicating premature weight concentration and insufficient posterior spread. Absent MH validation, the algorithm becomes ill-posed (\texttt{not work}), confirming the role of acceptance control in maintaining Bayesian consistency. These outcomes highlight that DEPF’s gains do not stem from ad-hoc noise, but from a calibrated, evidence-triggered expansion pipeline.

\textbf{Hyperparameter sensitivity: $\lambda$ and $A$.} We then study the kernel stability parameter $\lambda$, which controls the diagonal regularization added to the empirical covariance, and the KDE bandwidth constant $A$. The two tables below summarize the effect of these hyperparameters.

\begin{table}[hbtp]
\centering
\begin{minipage}{0.48\textwidth}
    \centering
    \caption{Sensitivity to $\lambda$.}
    \label{tab:ablation-lambda}
    \resizebox{\linewidth}{!}{%
    \begin{tabular}{ccccc}
    \toprule
    $\lambda$ & OCE$\uparrow$ & ADE$\downarrow$ & REV$\downarrow$ & LPS$\downarrow$ \\
    \midrule
    $1\!\times\!10^{-1}$ & $0.87\!\pm\!0.04$ & $22\!\pm\!1.5$ & $0.25\!\pm\!0.04$ & $0.10\!\pm\!0.05$ \\
    $1\!\times\!10^{-2}$ & $0.90\!\pm\!0.03$ & $19\!\pm\!0.8$ & $0.22\!\pm\!0.04$ & $0.10\!\pm\!0.05$ \\
    $1\!\times\!10^{-3}$ & $0.91\!\pm\!0.05$ & $17\!\pm\!1.2$ & $0.23\!\pm\!0.04$ & $0.10\!\pm\!0.05$ \\
    $1\!\times\!10^{-4}$ & $0.89\!\pm\!0.08$ & $18\!\pm\!2.1$ & $0.25\!\pm\!0.04$ & $0.10\!\pm\!0.05$ \\
    $1\!\times\!10^{-5}$ & $0.85\!\pm\!0.12$ & $25\!\pm\!3.5$ & $0.26\!\pm\!0.04$ & $0.10\!\pm\!0.05$ \\
    \bottomrule
    \end{tabular}%
    }
\end{minipage}
\hfill
\begin{minipage}{0.48\textwidth}
    \centering
    \caption{Sensitivity to KDE $A$.}
    \label{tab:ablation-bandwidth}
    \resizebox{\linewidth}{!}{%
    \begin{tabular}{ccccc}
    \toprule
    $A$ & Belief Dist.$\downarrow$ & Entropy$\downarrow$ & LPS$\downarrow$ & Steps$\downarrow$ \\
    \midrule
    0.10 & 2.10 & 0.37 & 0.23 & 25 \\
    0.30 & 1.00 & 0.26 & 0.21 & 23 \\
    0.50 & \textbf{0.22} & \textbf{0.17} & \textbf{0.20} & \textbf{18} \\
    1.00 & 0.35 & 0.22 & 0.22 & 20 \\
    2.00 & 1.10 & 0.34 & 0.26 & 27 \\
    \bottomrule
    \end{tabular}%
    }
\end{minipage}
\end{table}

Mid-range values of $\lambda$ ($10^{-3}$–$10^{-2}$) strike the best balance between numerical stability and diffusion accuracy, yielding the lowest ADE and competitive OCE. Too large a $\lambda$ overdamps useful moves and slows convergence; too small a $\lambda$ induces numerical instability and degraded metrics. For the bandwidth constant $A$, an intermediate value $A\!\approx\!0.5$ minimizes belief-to-goal distance, posterior entropy, and convergence steps, whereas either under- or over-diffusion (very small or very large $A$) harms recovery. This confirms the need for bandwidth matching to the evolving posterior covariance, as encoded by DEPF’s covariance-scaled perturbations.

\textbf{Hyperparameter sensitivity: $\beta$ and exploratory ratio.}
Finally, we study the entropy regularization strength $\beta$ and the ratio of exploratory particles per step, summarized in Table~\ref{tab:ablation-beta} and Table~\ref{tab:ablation-explore}.

\begin{table}[hbtp]
\centering
\begin{minipage}{0.48\textwidth}
    \centering
    \caption{Sensitivity to $\beta$.}
    \label{tab:ablation-beta}
    \resizebox{\linewidth}{!}{%
    \begin{tabular}{lcccc}
        \toprule
        $\beta$ & OCE$\uparrow$ & ADE$\downarrow$ & REV$\downarrow$ & LPS$\downarrow$ \\
        \midrule
        0.1 & $0.88\pm0.04$ & $22\pm1.2$ & $0.10\pm0.05$ & $0.25\pm0.03$ \\
        0.2 & $0.89\pm0.03$ & $20\pm1.0$ & $0.10\pm0.05$ & $0.22\pm0.02$ \\
        0.3 & $0.90\pm0.03$ & $19\pm0.8$ & $0.10\pm0.05$ & $0.20\pm0.01$ \\
        0.4 & $0.91\pm0.02$ & $18\pm0.7$ & $0.10\pm0.05$ & $0.19\pm0.01$ \\
        0.5 & $0.91\pm0.02$ & $18\pm0.6$ & $0.10\pm0.05$ & $0.20\pm0.01$ \\
        \bottomrule
    \end{tabular}}
\end{minipage}
\hfill
\begin{minipage}{0.48\textwidth}
    \centering
    \caption{Sensitivity to the ratio of exploratory.}
    \label{tab:ablation-explore}
    \resizebox{\linewidth}{!}{%
    \begin{tabular}{ccccc}
        \toprule
        Ratio (\%) & Belief Dist.$\downarrow$ & Entropy$\downarrow$ & LPS$\downarrow$ & Steps$\downarrow$ \\
        \midrule
        1\%  & 1.60 & 0.29 & 0.22 & 26 \\
        2\%  & 0.65 & 0.21 & 0.20 & 20 \\
        5\%  & \textbf{0.22} & \textbf{0.17} & \textbf{0.20} & \textbf{18} \\
        10\% & 0.30 & 0.19 & 0.21 & 19 \\
        20\% & 0.75 & 0.25 & 0.22 & 22 \\
        \bottomrule
    \end{tabular}}
\end{minipage}
\end{table}

As $\beta$ increases from $0.1$ to $0.4$, the task OCE steadily improves from $0.88\!\pm\!0.04$ to $0.91\!\pm\!0.02$, while the path cost (ADE) decreases from $22\!\pm\!1.2$ to $18\!\pm\!0.7$ and the localization error (LPS) drops from $0.25\!\pm\!0.03$ to $0.19\!\pm\!0.01$. Execution latency (REV) remains around $0.10\!\pm\!0.05$, indicating that entropy regularization primarily enhances convergence efficiency and localization accuracy without introducing additional delays. When $\beta$ is further increased to $0.5$, all metrics remain essentially unchanged, suggesting that performance enters a plateau and becomes robust within the range $\beta \in [0.3, 0.5]$. For the exploratory-particle ratio, there is a clear interior optimum around $5\%$ (Table~\ref{tab:ablation-explore}). Too few exploratory particles slow support expansion and delay recovery; too many inject excess randomness, raising entropy and slightly reducing stability. Together with the diffusion-related ablations, these results yield a practical recipe: pair a moderate exploratory ratio (about $5\%$) with a mid-range bandwidth ($A\!\approx\!0.5$), moderate smoothing ($\beta\!\approx\!0.3$–$0.4$), and a stability parameter $\lambda$ in $[10^{-3}, 10^{-2}]$ to reliably break S-PSI while maintaining efficient convergence.

\section{Conclusion}

In this work, we identified and formalized S-PSI, a baseline-specific limitation of bootstrap particle filters that arises under zero transition and no rejuvenation. To address this issue, we proposed the DEPF framework, which dynamically expands posterior support through exploratory particle injection, entropy-driven diffusion, and kernel-based perturbations validated by Metropolis–Hastings. Experiments on hazardous-gas localization tasks demonstrated that DEPF consistently corrects severe prior misalignment, substantially outperforming both classical perturbation strategies and RL-based baselines. By resolving bootstrap-specific lock-in while maintaining Bayesian rigor, DEPF offers a robust and practical solution for decision-support systems in emergency management.

\clearpage

\bibliography{iclr2026_conference}
\bibliographystyle{iclr2026_conference}

\clearpage

\addtocontents{toc}{\protect\setcounter{tocdepth}{2}}
\appendix
\renewcommand{\contentsname}{Appendix} 
\tableofcontents 
\newpage


\section{Particle Filtering: Recursive Approximation of Posterior Distributions}

The particle filtering process recursively approximates the posterior distribution \( P(\Theta_k \mid z_{1:k}) \) through four main steps: sampling, weight update, normalization, and resampling. These steps iteratively adapt particle sets to new observations, enabling sequential Bayesian inference.

\textbf{Step 1. Sampling (Prediction):} \cite{gordon1993novel} Particles are sampled from an importance distribution \( q(\Theta_k \mid \Theta_{k-1}, z_k) \), typically approximated by the state transition model \( P(\Theta_k \mid \Theta_{k-1}) \). The propagation of particles is expressed as:
\[
\Theta_k^{(i)} \sim P(\Theta_k \mid \Theta_{k-1}^{(i)}),
\]
where \( \Theta_k^{(i)} \) denotes the \( i \)-th particle at time \( k \).

\textbf{Step 2. Weight Update:} \cite{doucet2001introduction} Particle weights are updated to reflect their likelihood given the new observation. The unnormalized weights are calculated as:
\[
\tilde{w}_k^{(i)} = w_{k-1}^{(i)} \cdot \frac{P(z_k \mid \Theta_k^{(i)}) P(\Theta_k^{(i)} \mid \Theta_{k-1}^{(i)})}{q(\Theta_k^{(i)} \mid \Theta_{k-1}^{(i)}, z_k)},
\]
where \( P(z_k \mid \Theta_k^{(i)}) \) is the observation likelihood, and \( q(\Theta_k \mid \Theta_{k-1}, z_k) \) is the importance distribution used in sampling.

\textbf{Step 3. Normalization of Weights:} \cite{liu1998sequential} Weights are normalized to maintain their probabilistic interpretation:
\[
w_k^{(i)} = \frac{\tilde{w}_k^{(i)}}{\sum_{j=1}^N \tilde{w}_k^{(j)}}.
\]

\textbf{Step 4. Resampling:} \cite{doucet2000sequential} To mitigate weight degeneracy (domination by a few particles), resampling occurs when the effective particle number \( N_{\text{eff}} \) falls below a threshold:
\[
N_{\text{eff}} = \frac{1}{\sum_{i=1}^N (w_k^{(i)})^2}.
\]
Particles are then resampled based on current weights, and weights are reset uniformly:
\[
w_k^{(i)} = \frac{1}{N}.
\]

These recursive steps approximate the posterior \( P(\Theta_k \mid z_{1:k}) \). However, under the \emph{stationary bootstrap baseline with zero transition and no rejuvenation}, particle trajectories remain confined to the initial prior support. We formalize this diagnostic effect as \textbf{Stationarity-Induced Posterior Support Invariance (S-PSI)}: the posterior cannot expand beyond the prior support in this baseline setting. Importantly, S-PSI is \emph{not} an inherent property of particle filtering—classical perturbation strategies such as jittering, roughening, or resample--move can, in principle, relax this constraint, and are included as baselines in our experiments.

\section{Stationarity-Induced Posterior Support Invariance (S-PSI)}
\label{sec:PSI}

Let the prior \(p_{0}(\Theta)\) have support
\(\mathcal{S}_{0}:=\operatorname{supp}p_{0}(\Theta)
     =\{\theta\mid p_{0}(\theta)>0\}\subset\mathbb{R}^{7}\).
If particles are initialized exclusively within \( \mathcal{S}_0 \), then under a \emph{stationary bootstrap baseline with zero transition and no rejuvenation}, the posterior distributions at all subsequent time steps will satisfy:
\[
\text{supp}(p(\Theta \mid z_{1:k})) \subseteq \mathcal{S}_0, \quad \forall k.
\]

We refer to this diagnostic baseline effect as \textbf{Stationarity-Induced Posterior Support Invariance (S-PSI)}. It indicates that:

\begin{quote}
Even if observations strongly suggest a source location outside the prior support, the particle filter cannot estimate it—\textbf{not due to insufficient likelihood}, but because no particles exist in that region under the baseline assumptions.
\end{quote}

Consequently, estimation fails if the true source \(\Theta^*\) lies outside the initial prior support:
\[
\Theta^* \notin \mathcal{S}_0 \quad \Rightarrow \quad p(\Theta^* \mid z_{1:k}) = 0, \quad \forall k.
\]

\begin{proposition}[S-PSI under the stationary bootstrap baseline]
Given particles initialized exclusively within the prior support \(\mathcal{S}_0\), the particle support range \(\mathcal{S}_k\) at any time \(k\) satisfies:
\[
\mathcal{S}_k \subseteq \mathcal{S}_0, \quad \forall k \geq 0,
\]
with the base case:
\[
\mathcal{S}_0 = \mathcal{S}_{\text{prior}}.
\]
Thus, under the baseline assumptions, the posterior support remains invariant over time:
\[
\mathcal{S}_k \subseteq \mathcal{S}_{\text{prior}}, \quad \forall k \geq 0.
\]
\end{proposition}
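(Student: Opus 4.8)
The plan is to prove the claim by induction on $k$, tracking the finite set of particle locations $\mathcal{S}_k := \{\Theta_k^{(i)} : i = 1,\dots,N\}$ as it passes through the four operations of the bootstrap recursion—sampling, weight update, normalization, and resampling—and showing that under Assumption~\ref{assump:S0} none of these operations can place a particle outside $\mathcal{S}_0$.

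For the base case, at $k=0$ the particles are initialized exclusively within $\mathcal{S}_0 = \mathcal{S}_{\text{prior}}$ by hypothesis, so $\mathcal{S}_0 \subseteq \mathcal{S}_0$ holds trivially. For the inductive step, I would assume $\mathcal{S}_{k-1} \subseteq \mathcal{S}_0$ and examine each operation in turn. First, under S0 the transition kernel is the Dirac delta $p(\Theta_k \mid \Theta_{k-1}) = \delta(\Theta_k - \Theta_{k-1})$, so the sampling step carries each particle forward without displacement, $\Theta_k^{(i)} = \Theta_{k-1}^{(i)}$, and the post-sampling location set equals $\mathcal{S}_{k-1}$. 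Second, the bootstrap weight update $\tilde{w}_k^{(i)} = w_{k-1}^{(i)}\, p(z_k \mid \Theta_k^{(i)})$ and its normalization only rescale the weights attached to these fixed locations; they may drive some weights toward zero but never move a particle, leaving the location set unchanged. Third, resampling draws indices from the current weighted empirical measure, which can only duplicate or delete existing locations—never create new ones—so the resampled location set is a subset of the pre-resampling set. Since S0 additionally excludes any rejuvenation move (jittering, roughening, resample–move), no post-resampling perturbation reintroduces off-support mass. Chaining these inclusions yields $\mathcal{S}_k \subseteq \mathcal{S}_{k-1} \subseteq \mathcal{S}_0$, completing the induction.

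Finally, I would translate this particle-level result into the stated posterior-support conclusion. Because the bootstrap posterior is the weighted empirical measure $\sum_{i=1}^N w_k^{(i)}\,\delta(\Theta - \Theta_k^{(i)})$, which is supported exactly on the atoms of $\mathcal{S}_k$, we obtain $\operatorname{supp}(p(\Theta \mid z_{1:k})) \subseteq \mathcal{S}_k \subseteq \mathcal{S}_{\text{prior}}$ for all $k$; in particular any $\Theta^\ast \notin \mathcal{S}_{\text{prior}}$ receives zero posterior mass for every $k$.

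The only genuinely delicate point is bookkeeping the notion of \emph{support} consistently: I must distinguish the discrete set of particle atoms from the continuous prior support, and be explicit that assigning a zero (or vanishing) weight removes an atom rather than enlarging the set. Once the delta-transition and no-rejuvenation assumptions are invoked, however, each of the four steps reduces to a set inclusion, so the argument is essentially mechanical and no hard estimate is required.
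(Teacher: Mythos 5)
Your proof is correct and follows essentially the same route as the paper's: an induction on $k$ whose key step is that the Dirac-delta transition and the absence of rejuvenation force $\mathcal{S}_k \subseteq \mathcal{S}_{k-1}$, since reweighting and resampling can only rescale, duplicate, or delete existing atoms. Your version is somewhat more detailed than the paper's (which compresses the four bootstrap operations into a single sentence), but the underlying argument is identical.
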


\begin{proof}
By definition, the particle support \(\mathcal{S}_k\) at step \(k\) is:
\[
\mathcal{S}_k = \bigcup_{i=1}^N \{\Theta_k \in \mathbb{R}^7 : P(\Theta_k \mid \Theta_{k-1}^{(i)}) > 0\}, \quad \Theta_{k-1}^{(i)} \in \mathcal{S}_{k-1}.
\]

For the base case at \(k = 0\), by construction:
\[
\mathcal{S}_0 = \mathcal{S}_{\text{prior}} = \{\Theta_0 \in \mathbb{R}^7 : p_0(\Theta_0) > 0\}.
\]

Assume inductively that \(\mathcal{S}_{k-1} \subseteq \mathcal{S}_{\text{prior}}\). Particle propagation at time \(k\) depends entirely on particles from step \(k-1\), and under the stationary baseline assumption (no transition noise, no rejuvenation), we have:
\[
\mathcal{S}_k \subseteq \mathcal{S}_{k-1}.
\]

Combining this with the inductive hypothesis:
\[
\mathcal{S}_k \subseteq \mathcal{S}_{\text{prior}}, \quad \forall k \geq 0.
\]

Thus, particles cannot move beyond the initial support boundary under this baseline, making the posterior support invariant over time.
\end{proof}

\paragraph{Remarks.}
S-PSI is \emph{not} an inherent property of particle filtering but a consequence of the stationary bootstrap baseline. Classical perturbation strategies such as jittering, roughening, or resample--move rejuvenation can, in principle, relax this invariance and are included as baselines in \S \ref{sec:Experiment}. Our proposed method DEPF builds upon this insight by introducing belief-triggered exploratory injection, covariance-scaled diffusion, and MH validation, enabling adaptive support expansion only when data contradict the current belief.

\newpage
\section{Theoretical Analysis and Justification}
\label{appendix:theory_analysis}

\subsection{Introduction}
In this appendix, we provide a theoretical analysis of the Diffusion-Enhanced Particle Filtering (DEPF) framework proposed in Section~4, focusing particularly on the rationality, correctness, and asymptotic unbiasedness of its mechanisms: exploratory particle injection, entropy-based weight smoothing, kernel-induced stochastic perturbation, and Metropolis--Hastings (MH) validation.

\subsection{Exploratory Particles and Modified Prior}

By injecting exploratory particles uniformly drawn from an expanded region $B_k$ beyond the initial prior support $S_0$, DEPF implicitly modifies the initial prior distribution $p_0(\Theta)$ as:
\begin{equation}
p_{\text{new-prior}}(\Theta) = (1 - \epsilon) p_{0}(\Theta) + \epsilon U(B_k),\quad \epsilon \ll 1,
\end{equation}
where $U(B_k)$ denotes the uniform distribution on $B_k$. This is equivalent to relaxing the diagnostic \textbf{Stationarity-Induced Posterior Support Invariance (S-PSI)} baseline, which arises under a stationary bootstrap PF with zero transition and no rejuvenation (\S \ref{subsec:PSI}). Since $\epsilon$ is small, this injection introduces minimal bias while enabling posterior support expansion outside $S_0$.

\subsection{Entropy-Based Weight Smoothing}

To preserve particle diversity, DEPF smooths weights toward higher entropy. One implementation is:
\begin{equation}
w_k^{(i)} \leftarrow w_k^{(i)} + \beta H(w_k),\quad H(w_k)=-\sum_{i=1}^N w_k^{(i)}\log(w_k^{(i)}+\epsilon),
\end{equation}
which can be viewed as Bayesian smoothing \citep{doucet2001sequential,liu1998sequential}. In practice, we also adopt a standard \emph{tempering} scheme
\[
\tilde w_k^{(i)} \propto (w_k^{(i)})^{1/T_k},\quad w_k^{(i)}=\tilde w_k^{(i)}/\sum_j \tilde w_k^{(j)},
\]
with $T_k\geq 1$ adapted from entropy gaps, ensuring theoretical coherence with standard SMC while mitigating degeneracy.

\subsection{Kernel-Induced Stochastic Diffusion}

Particles are further perturbed by a Gaussian kernel:
\begin{equation}
\Delta\Theta_k^{(i)} \sim h_{\text{opt}} \, L \, \mathcal{N}(0, I),\quad
h_{\text{opt}} = A \cdot N^{-\tfrac{1}{n+4}},\quad
\Theta_k^{(i)} \leftarrow \Theta_k^{(i)} + \Delta\Theta_k^{(i)},
\end{equation}
where $LL^\top=\Sigma$ is the Cholesky factorization of the weighted covariance. This is equivalent to a kernel density approximation \citep{silverman2018density}, and ensures that as $N\to\infty$ the perturbed empirical measure converges to the true posterior.

\subsection{Metropolis--Hastings Validation}

To maintain Bayesian coherence, proposals are validated by a standard MH acceptance step:
\begin{equation}
\alpha_i = \min\!\left(1,\frac{\pi(\Theta_i')\,q(\Theta_i \mid \Theta_i')}{\pi(\Theta_i)\,q(\Theta_i' \mid \Theta_i)}\right),
\end{equation}
with $\pi(\Theta)\propto p(z_k\mid \Theta)\,p(\Theta \mid z_{1:k-1})$. For symmetric Gaussian $q$, this reduces to
\begin{equation}
\alpha_i = \min\!\left(1, \frac{p(z_k\mid \Theta_i')}{p(z_k\mid \Theta_i)}\right).
\end{equation}
This guarantees detailed balance and convergence to the true posterior distribution \citep{hastings1970monte}.

\subsection{Convergence and Theoretical Guarantees}

Combining these mechanisms, DEPF ensures:
\begin{enumerate}
    \item Minimal bias from exploratory injection, expanding support beyond $S_0$ only when warranted.
    \item Stability of particle diversity due to entropy-based weight smoothing.
    \item Asymptotic convergence via kernel-induced diffusion validated by MH acceptance.
\end{enumerate}
Therefore, DEPF is asymptotically unbiased:
\begin{equation}
\lim_{N \rightarrow \infty} p_{\text{particle}}(\Theta\mid z_{1:k}) = p_{\text{true posterior}}(\Theta\mid z_{1:k}).
\end{equation}

\section{Proof of Support Range Expansion Beyond the Prior Boundary}

Under the stationary bootstrap baseline with zero transition and no rejuvenation, particle filters exhibit the diagnostic phenomenon we term \textbf{Stationarity-Induced Posterior Support Invariance (S-PSI)}: the posterior support remains confined to the prior support set \(\mathcal{S}_{\text{prior}}\). This is not an inherent limitation of PF but a consequence of the baseline assumptions (\S \ref{subsec:PSI}). Classical remedies such as jittering, roughening, or resample--move can, in principle, expand support, but they do so in an always-on and often inefficient manner. Here, we formally prove that with DEPF’s enhancements—exploratory injection, entropy-based regularisation, and kernel-induced stochastic diffusion validated by MH—the support set \(\mathcal{S}_k\) can expand beyond \(\mathcal{S}_{\text{prior}}\).

\begin{proposition}[Expansion of Support Range Beyond S-PSI]
With the proposed enhancements, the support range \( \mathcal{S}_k \) satisfies the recursion
\[
\mathcal{S}_k^{\text{new}} = \mathcal{S}_k \cup \mathcal{B}, 
\qquad 
\mathcal{S}_{k+1} = \mathcal{S}_k^{\text{new}} \oplus h_{\text{opt}},
\]
where \( \mathcal{B} \) is the extended bounding box sampled by exploratory particles and \( \oplus h_{\text{opt}} \) denotes kernel-induced diffusion. Starting from
\[
\mathcal{S}_0 = \mathcal{S}_{\text{prior}},
\]
for any target state \( \Theta^* \in \mathcal{B} \), there exists a finite step \( k \) such that
\[
\Theta^* \in \mathcal{S}_k.
\]
\end{proposition}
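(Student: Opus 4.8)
The plan is to treat $\mathcal{S}_k$ as the \emph{reachable support} of the particle system at step $k$ — the set of states carrying strictly positive proposal/posterior mass — and to prove the claim by induction on $k$, exploiting that both operations in the recursion, the union with the exploratory box and the kernel dilation $\oplus h_{\text{opt}}$, are monotone (inclusion-preserving) and strictly expansive. First I would record the base case $\mathcal{S}_0 = \mathcal{S}_{\text{prior}}$ given in the statement, then establish the one-step inclusion $\mathcal{S}_{k+1} \supseteq (\mathcal{S}_k \cup \mathcal{B}_k) \oplus h_{\text{opt}}$ by tracking where positive mass can appear. Here I would make explicit that the fixed $\mathcal{B}$ appearing in the recursion is realized operationally by the data-dependent box $\mathcal{B}_k = [0, x_{\max}^{(k)} + \delta] \times [0, y_{\max}^{(k)} + \delta]$ of \S\ref{sec:depf}: the uniform injection deposits positive mass on all of $\mathcal{B}_k$, and the Gaussian perturbation $\Delta\Theta \sim h_{\text{opt}}\,\mathcal{L}\,\mathcal{N}(0, I)$ smears that mass over a neighborhood of effective radius $r \asymp h_{\text{opt}}$.

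The heart of the argument is a quantitative monotone-growth estimate showing the support boundary advances by a fixed positive increment each step. Interpreting $\oplus h_{\text{opt}}$ as morphological dilation by a ball of radius $r = c\,h_{\text{opt}}$ (the effective kernel width, beyond which the Gaussian mass is negligible), the injection margin $\delta$ forces $x_{\max}^{(k+1)} \geq x_{\max}^{(k)} + \delta$ and likewise for $y_{\max}^{(k)}$, so the exploratory box itself grows by at least $\delta$ per step. Iterating, after $K = \lceil \operatorname{dist}(\Theta^*, \mathcal{S}_0)/\delta \rceil$ steps the box $\mathcal{B}_K$ contains any prescribed bounded target $\Theta^* \in \mathcal{B}$, whence $\Theta^* \in \mathcal{S}_K^{\text{new}} \subseteq \mathcal{S}_K$ — a finite step, as required.

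The main obstacle, and where the proof must become probabilistic rather than purely set-theoretic, is justifying that the \emph{idealized} support recursion is actually realized by a \emph{finite} ensemble: injected exploratory particles near the new boundary carry only weight $\epsilon/|\mathcal{E}|$ and could be eliminated at resampling before they ever extend $x_{\max}^{(k)}$. I would resolve this using two mechanisms already in the method. First, the entropy-based weight smoothing $w_k^{(i)} \leftarrow w_k^{(i)} + \beta H(w_k)$ bounds each boundary particle's survival probability away from zero, so with probability at least some $p_0 > 0$ a boundary particle persists each step. Second, the Gaussian diffusion proposal has full support, so its proposal density is everywhere positive and the Metropolis–Hastings acceptance $\alpha_i$ is strictly positive on any region of positive likelihood; hence accepted moves into the newly covered region occur with positive probability. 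A Borel–Cantelli argument across steps then upgrades the deterministic inclusion to an almost-sure statement: the boundary advances by at least $\delta$ infinitely often, so for any bounded $\Theta^*$ there almost surely exists a finite $k$ with $\Theta^* \in \mathcal{S}_k$.

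Finally I would reconcile this expansion with statistical rigor: because the MH step enforces detailed balance with respect to $\pi(\Theta) \propto p(z_k \mid \Theta)\,p(\Theta \mid z_{1:k-1})$ (the validation step of \S\ref{sec:depf}), the newly reached region is reweighted consistently, so the recursion genuinely escapes $\mathcal{S}_{\text{prior}}$ as the support-expansion counterpart of the asymptotic-unbiasedness claim of Appendix~\ref{appendix:theory_analysis}.
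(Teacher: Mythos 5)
Your proposal reaches the right conclusion but takes a genuinely different route from the paper, and it is worth contrasting the two. The paper's proof is a short five-step sketch that treats $\mathcal{B}$ as a \emph{fixed} box already containing $\Theta^*$ (this is the hypothesis of the proposition): after a single exploratory injection, $\mathcal{S}_k^{\text{new}} = \mathcal{S}_k \cup \mathcal{B} \ni \Theta^*$, so at the level of the idealized support recursion the target is covered essentially at the first step, and the only remaining content is the qualitative remark that a particle near $\Theta^*$ survives weighting and resampling ``with positive probability.'' You instead read $\mathcal{B}$ operationally as the data-dependent box $\mathcal{B}_k = [0, x_{\max}^{(k)}+\delta]\times[0, y_{\max}^{(k)}+\delta]$ of \S\ref{sec:depf}, derive a monotone boundary-growth estimate of order $\delta$ per step, obtain a finite hitting time $K \approx \lceil \operatorname{dist}(\Theta^*,\mathcal{S}_0)/\delta\rceil$, and then upgrade the idealized set recursion to an almost-sure statement about the finite ensemble via survival bounds and Borel--Cantelli. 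Your version is more faithful to the algorithm as implemented and more quantitative (it is closer in spirit to the finite-step bound $1-(1-\delta\gamma)^k$ the paper proves separately in its later theory appendix), whereas the paper's version is shorter because it leans entirely on the hypothesis $\Theta^*\in\mathcal{B}$.

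Two caveats on your argument. First, the deterministic claim $x_{\max}^{(k+1)} \ge x_{\max}^{(k)} + \delta$ is too strong: uniform draws from $[0, x_{\max}^{(k)}+\delta]$ land strictly below the new edge almost surely, and boundary particles can be eliminated at resampling, so $x_{\max}$ can stall or even shrink; the per-step advance is only \emph{close to} $\delta$ \emph{with some probability} $p_0>0$, which still yields an a.s.\ finite hitting time but not the clean formula $K=\lceil\operatorname{dist}/\delta\rceil$. Second, the Borel--Cantelli step needs the conditional (L\'evy) form since the advancement events across steps are not independent; your uniform lower bound $p_0$ is exactly what that version requires, so the gap is one of statement rather than substance. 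Neither issue is fatal, and your writeup is, if anything, more careful than the paper's own proof at the points where both arguments are informal.
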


\begin{proof}
\textbf{Step 1: Base case.} At \(k=0\), \(\mathcal{S}_0=\mathcal{S}_{\text{prior}}\).

\textbf{Step 2: Exploratory injection.} At step \(k>0\), particles sampled from \(\mathcal{U}(\mathcal{B})\) enlarge the predictive support:
\[
\mathcal{S}_k^{\text{new}} = \mathcal{S}_k \cup \mathcal{B}.
\]

\textbf{Step 3: Weight-based survival.} Exploratory particles obtain weights proportional to their likelihood:
\[
w_k^{(j)} \propto p(z_k \mid \Theta_k^{(j)}).
\]
Particles consistent with observations survive resampling, ensuring that regions supported by data persist.

\textbf{Step 4: Kernel-induced diffusion.} Surviving particles are further perturbed:
\[
\Delta \Theta_k^{(i)} \sim h_{\text{opt}} L \mathcal{N}(0,I), 
\qquad
\Theta_k^{(i)} \leftarrow \Theta_k^{(i)} + \Delta \Theta_k^{(i)},
\]
which dilates the support: \(\mathcal{S}_{k+1}=\mathcal{S}_k^{\text{new}} \oplus h_{\text{opt}}\).

\textbf{Step 5: Induction.} Iterating steps 2–4, if \(\Theta^*\in \mathcal{B}\), then with positive probability it survives weighting and resampling, and by induction there exists finite \(k\) such that \(\Theta^*\in \mathcal{S}_k\).
\end{proof}

\paragraph{Remarks.} This result shows that DEPF breaks the S-PSI constraint by combining: (i) exploratory injection to seed new regions, (ii) likelihood-driven survival so only data-supported regions expand, and (iii) kernel diffusion to propagate local coverage. Together with MH validation (\S \ref{subsec:Problem_Setup}), these steps ensure support expansion is both data-triggered and statistically coherent.

\newpage
\section{Detailed Scenario Specifications}
\label{app:scenario-spec}

The three policy error scenarios are defined as follows: \textcolor{cyan}{\textbf{(1) No Error (Ideal Scenario)}} in Figure~\ref{fig:policy_error}(a) represents an ideal policy scenario in which the initial government decision on the disaster source location is completely accurate. Specifically, the initial particle distribution (gray region: $(0,20)\times(0,20)$) accurately covers the true source region (green region: $(10,20)\times(10,20)$). This scenario serves as a baseline, assessing the best-case performance without initial decision uncertainty. \textcolor{cyan}{\textbf{(2) Moderate Error (Partial Misalignment Scenario)}} in Figure~\ref{fig:policy_error}(b) simulates a realistic and moderate error scenario where the government’s initial policy assumptions about the disaster location (gray region: $(0,20)\times(0,20)$) are partially misaligned, overlapping only partially with the true disaster source region (green region: $(10,25)\times(10,25)$). This setup tests the ability of our method and baseline algorithms to adaptively correct moderate inaccuracies in initial policy assumptions, reflecting realistic emergency management conditions. \textcolor{cyan}{\textbf{(3) Severe Error (Complete Misalignment, PSI Scenario)}} in Figure~\ref{fig:policy_error}(c) represents the most challenging and realistic scenario, termed the Posterior Support Invariance scenario, in which the initial policy completely excludes the true disaster location. Here, the initial policy region (gray region: $(0,20)\times(0,20)$) and the true disaster source regions (green regions: $(15,30)\times(20,30) \cup (20,30)\times(15,20)$) are entirely disjoint, creating a strict zero-prior barrier. This scenario explicitly tests each algorithm’s robustness and ability to dynamically correct severe initial policy errors—situations where \textcolor{black}{bootstrap particle filtering methods typically fail} due to their inherent inability to extend beyond the initial support.

\newpage
\section{ISLC environments}
\label{sec:ISLC_env}

In this section, we describe the simulation environment adapted to a reinforcement learning and planning framework suitable for conducting research as outlined in Section \S \ref{sec:Preliminaries} of the paper.
\begin{table}[htbp]
    \centering
    \caption{Parameter Distributions for the Training Scenarios}
    \label{tab:training_parameters}
    
    \begin{tabular}{|c|c|}
        \hline
        \textbf{Source Parameter} & \textbf{Distribution} \\ \hline
        location of field source \( x_s\) & Uniform \( \mathcal{U}(5, 25) \) \\ \hline
        location of field source \( y_s \) & Uniform \( \mathcal{U}(5, 25) \) \\ \hline
        Release Strength \( q_s \) & Uniform \( \mathcal{U}(10, 3000)\) \\ \hline
        Wind Speed \( u_s\) & Uniform \( \mathcal{U}(0, 6) \) \\ \hline
         Decay parameter \(\lambda\) & Uniform \( \mathcal{U}(0, 8) \)\\ \hline
        Diffusivity \( d_s \) & Uniform \( \mathcal{U}(1, 5) \)\\ \hline
        Sensor Noise \( \epsilon \) & Fixed at 0.5 \\ \hline
        Environmental Noise \( \sigma \) & Fixed at 0.4 \\ \hline
        Effective Samples $N_{eff}$  & Fixed at 0.6 \\ \hline
    \end{tabular}
\end{table}
\subsection{Environment Overview}

The environment simulates a pollutant dispersion scenario within a defined two-dimensional spatial boundary. An RL agent operates within this environment by sequentially selecting positions to gather sensor measurements. Importantly, the environment itself does not explicitly provide reward signals. Instead, the agent has access only to its positional coordinates and the concentration measurements obtained at those positions.

The spatial boundaries for the simulation environment are as follows:
\begin{itemize}
\item \(x\)-axis: from \(0\) to \(25\) meters.
\item \(y\)-axis: from \(0\) to \(25\) meters.
\end{itemize}

\subsection{Pollutant Dispersion Model}

\begin{figure}[!t]
  \centering
  \begin{subfigure}[t]{0.48\linewidth}
    \includegraphics[width=\linewidth]{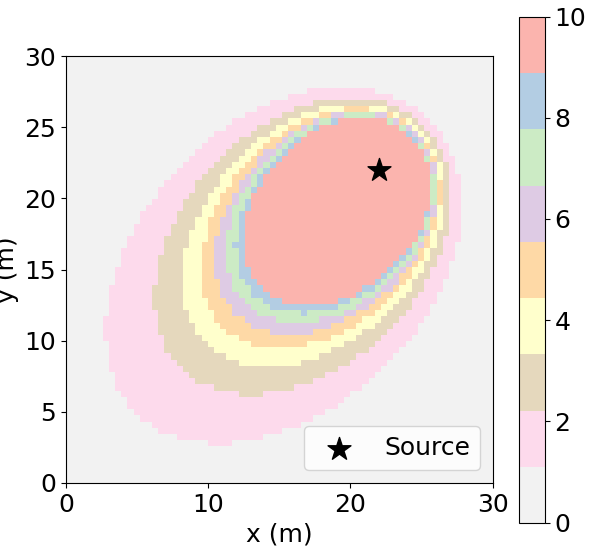}
    \caption{Plume Model}
    \label{fig:plume_model}
  \end{subfigure}
  \hfill
  \begin{subfigure}[t]{0.48\linewidth}
    \includegraphics[width=\linewidth]{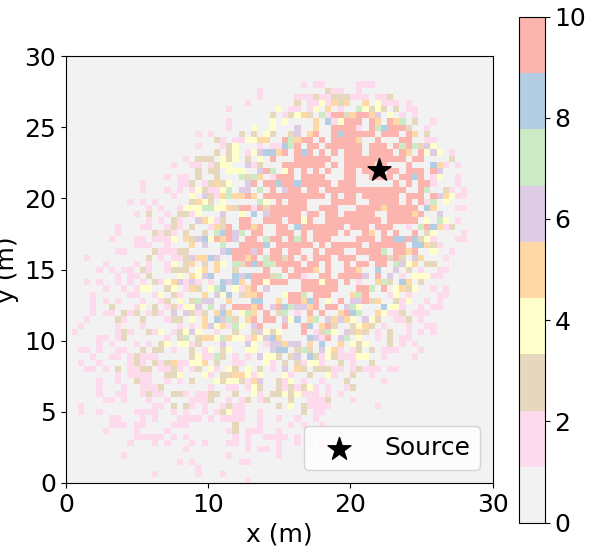}
    \caption{Plume Model with noise}
    \label{fig:plume_model_noise}
  \end{subfigure}
  \caption{Gaussian Plume Model Visualization.}
  \label{fig:GPM_visualization}
\end{figure}

Pollutant concentrations are simulated using a Gaussian plume dispersion model in Figure \ref{fig:GPM_visualization}, characterized by source emission parameters and environmental conditions:

\[
C = \frac{q_s}{4\pi d_s \cdot \text{dist}} \exp\left(-\frac{(x - x_{\text{source}})u\cos\phi + (y - y_{\text{source}})u\sin\phi + \text{dist}}{2d_s}\right),
\]

where \(q_s\) is the pollutant release rate, \(u\) is the wind speed, \(\phi\) is the wind direction, \(d_s\) represents dispersion coefficients, \(\text{dist}\) denotes the Euclidean distance between the agent's position and the pollutant source.

\subsection{Agent Observations and Actions}

At each time step, the RL agent observes:
\begin{itemize}
\item Its current two-dimensional coordinates \((x, y)\).
\item The pollutant concentration $z$ measured at that location.
\end{itemize}

Based on these observations, the agent selects its next position within the spatial boundary constraints. The agent’s action space consists of discrete movements within predefined distances, typically in increments of \(1\) meter in either the \(x\) or \(y\) direction.

\subsection{No Explicit Reward Structure}

Crucially, the environment itself does not provide any explicit reward signals. The agent must indirectly \textbf{infer} useful positional information solely from the sequence of measured pollutant concentrations. Consequently, the agent's learning process depends entirely on interpreting these indirect observations, making it a suitable scenario to investigate methods dealing with sparse or implicit rewards, as discussed extensively in Section \S \ref{sec:Preliminaries}.

\newpage
\section{Detailed Description of Evaluation Metrics}

We provide detailed definitions and interpretations for each evaluation metric used in the experiments, accompanied by illustrative examples:

\begin{itemize}

\item \textbf{Operational Completion Efficacy (OCE)}: This metric quantifies the proportion of emergency response missions successfully completed. For instance, if an algorithm successfully completes 90 out of 100 missions, it achieves an OCE of 0.90, indicating high reliability and effectiveness in operational scenarios.

\item \textbf{Average Deployment Efficiency (ADE)}: ADE measures the average total distance traveled by response units during their missions. For example, if algorithm A requires units to travel an average of 15 kilometers per mission, while algorithm B achieves the same objectives in only 10 kilometers, algorithm B demonstrates a better (lower) ADE, suggesting superior efficiency in routing and resource usage.

\item \textbf{Response Execution Velocity (REV)}: REV assesses the total time from initial deployment to mission completion. As an example, consider two algorithms where one completes missions in an average of 20 minutes, while the other requires 30 minutes on average. The first algorithm, with its shorter completion times (higher REV), is preferable for urgent, time-sensitive scenarios.

\item \textbf{Localization Precision Score (LPS)}: LPS evaluates localization accuracy by measuring the average spatial difference between estimated and actual source positions. For instance, if algorithm A consistently estimates the source within an average of 2 meters from its actual location, whereas algorithm B averages a 5-meter error, algorithm A demonstrates a superior (lower) LPS, reflecting more precise and reliable source localization.

\end{itemize}

These metrics collectively offer a comprehensive evaluation framework, enabling thorough assessments of both algorithmic efficiency and practical effectiveness in complex operational environments.

\section{Baseline Methods}
\label{sec:baselines}

Below, all baseline methods are described using the unified notation from this paper (\( \Theta\) for the hidden source vector, \(p_k\) for robot position, \(z_{1:k}\) for observations, \(b_k(\Theta)\) for the belief, \(a_k\) for actions, \(r_k\) for rewards, and discount factor \(\gamma\)).

\subsection{RL-PF Methods}

\paragraph{AGDC.}
A self-terminating RL framework that uses the particle-filter posterior’s uncertainty to decide when the source has been located, thus avoiding handcrafted sparse rewards.  
\textbf{Belief approximation:}  
\[
b_k(\Theta)\approx\sum_{i=1}^N w_k^{(i)}\,\delta(\Theta-\Theta_k^{(i)}),\quad\sum_i w_k^{(i)}=1.
\]  
\textbf{Stop criterion \& intrinsic reward:} Compute the posterior covariance \(C_k=\mathrm{Cov}_{b_k}(\Theta)\) and its diagonal standard deviation \(\mathrm{STD}_k=\sqrt{\mathrm{diag}(C_k)}\). If \(\|\mathrm{STD}_k\|<\zeta\), set  
\[
r_k=1\quad\text{and terminate;}\quad\text{otherwise }r_k=0.
\]  
\textbf{AGDC-KLD:} Uses the Kullback–Leibler divergence between successive beliefs as reward,  
\[
r_k = D_{\mathrm{KL}}\bigl(b_{k+1}\,\|\,b_k\bigr),
\]
driving actions that yield maximal information gain.

\paragraph{PC-DQN.}
The first source-search DRL method combining DQN with PF. It compresses particles into a compact 6-D feature vector using DBSCAN, then trains a vanilla DQN for discrete moves.  
\textbf{Reward:} \(-1\) per step until the source is declared (then \(+20\)).  

\paragraph{GMM-PFRL.}
A continuous-action RL approach using DDPG with GRU memory. The PF posterior is approximated by a Gaussian mixture model,  
\[
b_k(\Theta)\approx\sum_{m=1}^M \pi_m \,\mathcal{N}(\Theta\mid \mu_m,\Sigma_m),
\]  
with mixture parameters used as state input. This method handles continuous moves and outperforms information-theoretic baselines.

\subsection{Information-Theoretic Planners}

\paragraph{Infotaxis.}
A POMDP planner minimizing expected posterior variance:  
\[
a_k=\arg\min_a\,\mathbb{E}_{z_{k+1}}\bigl[\mathrm{Var}(\Theta\mid z_{1:k},z_{k+1})\bigr].
\]

\paragraph{Entrotaxis.}
A simplified Infotaxis variant using entropy of the predictive distribution as reward,  
\[
a_k=\arg\max_a\,H[b_{k+1}].
\]

\paragraph{DCEE.}
Dual-control exploration-exploitation planning:  
\[
a_k=\arg\min_a\Bigl\{\|\mu_{k+1}-p_{k+1}\|^2+\lambda\,\mathrm{tr}\bigl(\mathrm{Cov}[\Theta\mid z_{1:k+1}]\bigr)\Bigr\}.
\]

\subsection{Classical SMC Perturbation Baselines}

Under the \emph{stationary bootstrap baseline with zero transition and no rejuvenation}, PF exhibits \textbf{Stationarity-Induced Posterior Support Invariance (S-PSI)}: the posterior cannot escape the initial prior support (\S \ref{subsec:PSI}). This is not an inherent PF limitation; classical perturbation methods can relax S-PSI by injecting noise, albeit in an always-on and uncontrolled manner. We include three such baselines:

\paragraph{PF+Jittering.}
Each particle receives a small Gaussian perturbation after resampling:  
\[
\Theta_k^{(i)} = \Theta_{k-1}^{(i)} + \epsilon_k, \quad \epsilon_k \sim \mathcal{N}(0, \Sigma).
\]  
Ensures support expansion but without control over when/where to perturb.

\paragraph{PF+Roughening.}
Similar to jittering but with covariance scaled to $N$ and state dimension, encouraging broader diffusion:  
\[
\Theta_k^{(i)} = \Theta_{k-1}^{(i)} + \epsilon_k, \quad \epsilon_k \sim \mathcal{N}(0, \Sigma_{\text{rough}}).
\]  
More exploratory, but risks over-dispersion.

\paragraph{PF+Rejuvenation (Resample--Move).}
After resampling, particles undergo MH moves with Gaussian proposals:  
\[
\Theta' \sim \mathcal{N}(\Theta,\sigma_{\text{rm}}^2\Sigma),\qquad
\alpha = \min\!\left(1,\frac{p(z_k\mid \Theta')}{p(z_k\mid \Theta)}\right).
\]  
Provides stronger correction but increases cost.

\paragraph{Comparison to DEPF.}
While these baselines demonstrate that S-PSI is not universal, they lack principled triggers and validation. In contrast, DEPF expands support \emph{only when} observations contradict the belief, scales diffusion with posterior covariance, and validates proposals via MH acceptance, ensuring minimal-bias and Bayesian coherence.

\section{Diffusion-Driven Particle Filtering}
\label{app:diffusion-pf}

\subsection{Stationarity-Induced Posterior Support Invariance (S-PSI)}
Under the \emph{stationary bootstrap baseline with zero transition and no rejuvenation}, particle filters exhibit
\textbf{Stationarity-Induced Posterior Support Invariance (S-PSI)}: particle support remains confined to the initial prior region $S_{\mathrm{prior}}$. If the true state $\Theta^*$ lies outside this support, then
\[
  \mathrm{supp}\bigl(p(\Theta\mid z_{1:k})\bigr) \subseteq S_{\mathrm{prior}},\quad\forall k,
\]
so Bayesian updates cannot recover it. This baseline pathology motivates explicit support-expansion mechanisms. Classical perturbations (jittering, roughening, resample--move) can in principle relax S-PSI but do so in an always-on and uncontrolled manner. 

\subsection{Diffusion-Driven Support Expansion}
To overcome S-PSI in a principled way, DEPF augments filtering with four components:

\begin{enumerate}
  \item \textbf{Belief-Triggered Exploratory Particles:} 
  At each step, a small fraction $E$ of particles is redrawn from an extended box $B\supseteq S_{\mathrm{prior}}$:
  \[
    \Theta^{(j)}_k \sim \mathcal U(B),\quad j\in E,\quad w^{(j)}_k=\tfrac{\epsilon}{|E|},\ \epsilon\ll1.
  \]
  These particles seed new regions but survive only if supported by likelihood.

  \item \textbf{Entropy/Tempering Regularisation:} 
  To prevent weight collapse, we smooth weights toward higher entropy, e.g. via tempering:
  \[
    \tilde w_k^{(i)} \propto (w_k^{(i)})^{1/T_k},\quad 
    w_k^{(i)}=\tilde w_k^{(i)}/\sum_j \tilde w_k^{(j)},\quad T_k\ge1.
  \]

  \item \textbf{Covariance-Scaled Stochastic Diffusion:} 
  Surviving particles are perturbed by a Gaussian kernel aligned with the weighted covariance:
  \[
    \Delta\Theta^{(i)}_k \sim h_{\mathrm{opt}}\,L\,\mathcal N(0, I),
    \quad LL^T=\Sigma,
  \]
  with $\mu=\sum_i w^{(i)}_k\Theta^{(i)}_k$, 
  $\Sigma=\sum_i w^{(i)}_k(\Theta^{(i)}_k-\mu)(\Theta^{(i)}_k-\mu)^T+\lambda I$, 
  and $h_{\mathrm{opt}}=A N^{-1/(n+4)}$.

  \item \textbf{MH Validation:} 
  Each perturbed particle $\Theta'$ is accepted with standard Metropolis--Hastings probability
  \[
    \alpha_i = \min\!\left(1,\ \frac{p(z_k\mid \Theta')}{p(z_k\mid \Theta^{(i)}_k)}\right),
  \]
  since the Gaussian proposal is symmetric. Accepted proposals expand support consistently with Bayesian updates.
\end{enumerate}

These steps yield the recursion
\[
  S_{k+1}=\bigl(S_k\cup B\bigr)\oplus h_{\mathrm{opt}},
\]
allowing support expansion beyond $S_{\mathrm{prior}}$ only when justified by data.
\subsection{Pseudocode}
\begin{algorithm}[H]
\caption{Diffusion-Enhanced Particle Filter (DEPF)}
\label{alg:diffusion-pf}
\begin{algorithmic}[1]
\REQUIRE prior support $S_{\mathrm{prior}}$, extended box $B$, particle count $N$, thresholds $\eta,\epsilon$, smoothing parameter $\beta$, kernel parameters $A,\lambda$, horizon $K$
\STATE Initialize $\{(\Theta_0^{(i)},w_0^{(i)})\}_{i=1}^N$ with $\Theta_0^{(i)}\sim S_{\mathrm{prior}}$ and $w_0^{(i)}=1/N$
\FOR{$k=1$ \textbf{to} $K$}
  \STATE Select an exploratory index set $E\subseteq\{1,\dots,N\}$
  \FOR{each $i\in E$}
    \STATE $\Theta_k^{(i)} \gets \mathcal U(B)$;\quad $w_k^{(i)} \gets \epsilon/|E|$
  \ENDFOR
  \FOR{each $i\in\{1,\dots,N\}\setminus E$}
    \STATE $\Theta_k^{(i)} \gets \Theta_{k-1}^{(i)}$ \COMMENT{static prediction}
  \ENDFOR
  \STATE \textbf{Weight update:}\quad $w_k^{(i)} \propto w_{k-1}^{(i)}\,p(z_k\mid \Theta_k^{(i)})$ for $i=1,\dots,N$
  \STATE \textbf{Entropy/tempering smoothing (optional):}\quad $\tilde w_k^{(i)}\propto (w_k^{(i)})^{1/T_k}$; \quad $w_k^{(i)}\gets \tilde w_k^{(i)}/\sum_{j=1}^N \tilde w_k^{(j)}$
  \IF{$\mathrm{ESS}(\{w_k^{(i)}\}_{i=1}^N) < \eta$}
    \STATE Resample $\{\Theta_k^{(i)}\}_{i=1}^N$ according to $\{w_k^{(i)}\}_{i=1}^N$;\quad set $w_k^{(i)} \gets 1/N$
  \ENDIF
  \STATE $\mu \gets \sum_{i=1}^N w_k^{(i)} \Theta_k^{(i)}$;\quad $\Sigma \gets \sum_{i=1}^N w_k^{(i)}(\Theta_k^{(i)}-\mu)(\Theta_k^{(i)}-\mu)^{\top} + \lambda I$
  \STATE Compute $L$ via $\Sigma = LL^{\top}$;\quad $h_{\mathrm{opt}} \gets A\,N^{-1/(n+4)}$
  \FOR{$i=1$ \textbf{to} $N$}
    \STATE Sample $\delta\sim \mathcal N(0,I)$;\quad $\Delta \gets h_{\mathrm{opt}}\,L\,\delta$
    \STATE $\Theta' \gets \Theta_k^{(i)} + \Delta$
    \STATE $\alpha \gets \min\!\left(1,\ \dfrac{p(z_k\mid \Theta')}{p(z_k\mid \Theta_k^{(i)})}\right)$ \COMMENT{MH with symmetric proposal}
    \STATE Draw $u\sim \mathcal U(0,1)$
    \IF{$u < \alpha$}
      \STATE $\Theta_k^{(i)} \gets \Theta'$
    \ENDIF
  \ENDFOR
\ENDFOR
\end{algorithmic}
\end{algorithm}
\paragraph{How the RL reward in the figure fits with the DEPF pseudocode.}
The figure defines an \emph{information–gain} reward
\[
R_k \;=\; \mathbb{E}_{o_{k+1}}\!\big[\,D_{\mathrm{KL}}\!\big(b_{k+1}\,\|\,b_k\big)\,\big],
\]
where \(b_k\) and \(b_{k+1}\) are the belief (posterior) distributions before and after receiving the next observation \(o_{k+1}\).
Using \(R_k\) turns action selection into a sequential decision problem: a policy \(\pi(a_k\mid b_k)\) is trained (e.g., with PPO)
to pick the next move \(a_k\) that maximizes \(\mathbb{E}[\sum_t \gamma^t R_{k+t}]\).
\emph{DEPF is the inference layer that \underline{produces} these beliefs \(b_k\)} given the data stream; it is orthogonal to (and
plug-compatible with) the controller that uses the reward above. In other words, the policy’s choice \(a_k\) determines the
next sensor reading \(z_{k+1}\), and DEPF turns \((z_{1:k+1})\) into \(b_{k+1}\); the policy is then rewarded by how much
the belief moved, \(D_{\mathrm{KL}}(b_{k+1}\|b_k)\).

\paragraph{What each block in Algorithm~\ref{alg:diffusion-pf} does.}
\emph{Initialization} draws particles from the prior support \(S_{\mathrm{prior}}\) and sets uniform weights. Each iteration \(k\) then
implements the three DEPF mechanisms and a validation step:
\textbf{(i) Exploratory injection (lines 3–7):} a small set \(E\) of indices is sampled and the corresponding particles are
replaced by \(\Theta_k^{(j)}\sim \mathcal{U}(B)\) for an \emph{extended} box \(B\supseteq S_{\mathrm{prior}}\); their weights are set to a tiny mass
\(\epsilon/|E|\). This provides \emph{global} support seeding outside the prior and is what breaks the S-PSI baseline when data
contradict the current belief. (In practice, \(E\) can be \emph{belief-triggered}, e.g., only if \(\mathrm{ESS}(\{w_k^{(i)}\})<\eta\) or
\(D_{\mathrm{KL}}(b_k\|b_{k-1})>\tau\).)

\textbf{(ii) Bayesian weight update \& smoothing (lines 8–12):} all particles, including the exploratory ones, are reweighted by
the likelihood \(p(z_k\mid \Theta_k^{(i)})\). Optional \emph{tempering} (\(T_k\ge 1\)) smooths overly peaky weights to prevent premature
collapse; weights are then normalized. If \(\mathrm{ESS}\) falls below \(\eta\), resampling is performed and weights are reset.

\textbf{(iii) Covariance-scaled diffusion (lines 13–16):} from the weighted cloud, the posterior mean \(\mu\) and covariance
\(\Sigma\) are computed, with a small ridge \(\lambda I\) for stability. A Cholesky factor \(L\) aligns a Gaussian step to the posterior
geometry, and the bandwidth
\(h_{\mathrm{opt}} = A\,N^{-1/(n+4)}\) sets the step size (KDE-style, shrinking with \(N\) and growing with dimension \(n\)).
Each particle proposes \(\Theta'=\Theta_k^{(i)} + h_{\mathrm{opt}}L\delta\), \(\delta\sim\mathcal{N}(0,I)\), which \emph{locally} dilates the support
along high-uncertainty directions.

\textbf{(iv) MH validation (lines 17–21):} because the proposal \(q\) is symmetric, the Metropolis–Hastings acceptance reduces to
a \emph{likelihood ratio},
\[
\alpha \;=\; \min\!\Bigl(1,\; \tfrac{p(z_k\mid \Theta')}{p(z_k\mid \Theta_k^{(i)})}\Bigr).
\]
If \(u\sim\mathcal{U}(0,1)\) is below \(\alpha\), the move is accepted. This step enforces Bayesian coherence and filters out spurious
expansion.

\paragraph{End-to-end picture with the reward.}
Together, lines 3–21 implement the belief update \(b_k\!\mapsto\! b_{k+1}\) in a way that is dormant when data agree with the
current belief (few injections, small accepted moves) and active when they disagree (more accepted moves toward
informative regions). The controller in the figure then evaluates the \emph{change} in belief via
\(D_{\mathrm{KL}}(b_{k+1}\|b_k)\) and learns actions that maximize its expectation. Thus, the figure’s reward explains \emph{how
actions are chosen}, while Algorithm~\ref{alg:diffusion-pf} explains \emph{how beliefs are updated} so that informative actions
actually lead to recoveries from mis-specified priors.

\newpage

\section{Detailed Experimental Analysis}

Table~\ref{tab:policy_error_experiments} provides a systematic comparison across three levels of prior error and two spatial scales. We analyze the results by scenario, scale, and method family, and then link the observed outcomes to the underlying mechanisms of DEPF and the baselines.

In the \textit{No Error} scenario, where the initial prior fully covers the true source, most methods succeed. DEPF reaches OCE of $0.90$ with LPS fixed at $0.20$, ADE of $19$ in small-scale and $167$ in large-scale, and REV of $0.10$. AGDC closely matches these values, confirming that when no structural correction is required both methods perform optimally. PC-DQN and GMM-PFRL attain OCE around $0.80$ but with higher ADE. Information-theoretic planners such as Infotaxis show partial success in the small-scale environment (OCE $0.85$) but collapse to near zero success in large-scale. Perturbation baselines (Jittering, Roughening, Rejuvenation) remain close to DEPF under this easy condition, with OCE between $0.88$ and $0.89$, but their always-on diffusion leads to slightly larger LPS values (0.23–0.26) and longer ADE compared to DEPF’s minimal-bias behavior. This shows that DEPF does not overshoot when the prior is already correct.

The \textit{Moderate Error} condition exposes sharper differences. DEPF sustains OCE of $0.90$, LPS of $0.20$, and ADE of $22$ in small-scale and $200$ in large-scale, maintaining REV at $0.10$. AGDC drops dramatically, with OCE of $0.45/0.42$, LPS climbing to $2.6$, ADE lengthening to $59/235$, and REV inflating to $0.40$ with frequent timeouts in large-scale. Other RL baselines degrade to OCE around $0.40$ with high ADE and LPS, while information-theoretic planners almost entirely fail in the larger environment. Perturbation baselines partly alleviate this condition: PF+Rejuvenation performs best (OCE $0.52/0.48$, LPS $2.7/2.9$, ADE $50/225$), followed by Roughening (OCE $0.48/0.44$, LPS $2.9/3.1$, ADE $55/235$) and Jittering (OCE $0.40/0.36$, LPS $3.2/3.4$, ADE $65/250$). However, these remain far below DEPF’s near-ideal accuracy and efficiency. The contrast reflects the difference between always-on perturbations, which add diversity without discrimination, and DEPF’s belief-triggered expansion guided by covariance and validated by MH acceptance.

The most demanding case is the \textit{Severe Error} scenario, corresponding to the S-PSI baseline where the prior and true state are completely disjoint. Here, DEPF achieves OCE of $0.89/0.88$, LPS of $0.20$, ADE of $27/255$, and REV of $0.10$, well within the $100/300$-step timeout thresholds. All RL and information-theoretic baselines collapse (OCE $<0.05$, LPS $>12.5$, ADE and REV exceeding timeouts), confirming their inability to cross strict prior boundaries. Perturbation baselines show small but non-zero recovery in small-scale (Rejuvenation OCE $0.16$, Roughening $0.10$, Jittering $0.06$) but degrade severely in large-scale, with OCE dropping to $0.12/0.08/<0.05$ and ADE approaching or exceeding $285$ steps. Their LPS values remain an order of magnitude higher than DEPF. This contrast highlights that while naive diffusion can occasionally escape narrow priors in limited domains, only DEPF provides consistent, data-driven support expansion at scale.

Across all metrics, DEPF exhibits stable LPS at $0.20$ irrespective of scale or error severity, indicating precise localization. In comparison, perturbation methods and RL baselines experience a tenfold increase in LPS under moderate and severe errors. ADE and REV patterns reinforce the efficiency of DEPF: its paths remain short and consistent, while baselines often exceed timeout thresholds in large domains. The cross-scale comparison further shows that only DEPF resists degradation when moving from 1:30 to 1:300 environments; all other methods experience steep drops in OCE and sharp increases in ADE and REV.

These results align directly with DEPF’s design. The belief-triggered exploratory injection ensures that expansion is dormant in the No Error case but activates under misalignment. Covariance-scaled diffusion guides exploration along directions of highest uncertainty, avoiding the excessive spread of jittering or roughening. MH validation rejects unsupported moves, preserving Bayesian coherence. Together, these mechanisms yield minimal bias and high efficiency, enabling DEPF to overcome S-PSI conditions and consistently outperform baselines under prior misalignment while remaining competitive under ideal conditions.

\subsection{Detailed analysis of Table~\ref{tab:policy_error_experiments} by scenario and scale}

Table~\ref{tab:policy_error_experiments} compares eight methods across three prior-error severities (No, Moderate, Severe) and two spatial scales (1:30 and 1:300), using four metrics: success (OCE$\uparrow$), efficiency (ADE$\downarrow$), time-to-completion (REV$\downarrow$), and localization accuracy (LPS$\downarrow$). The patterns are consistent with the theoretical picture established in the main text: under the stationary bootstrap baseline with zero transition and no rejuvenation (the S-PSI diagnostic), methods that do not explicitly expand support struggle as prior misalignment worsens, especially in large domains; DEPF maintains performance by triggering expansion only when data contradict the belief, scaling diffusion with the posterior covariance, and validating proposals via an MH step.

In the \emph{No Error} condition, where the prior fully covers the truth, most algorithms perform well and DEPF does not seek to outperform them but to match the upper bound without incurring unnecessary exploration cost. DEPF attains OCE $0.90\pm0.03$ at both scales, LPS $0.20\pm0.01$, ADE $19\pm0.8$ (1:30) and $167\pm15$ (1:300), and REV $0.10\pm0.05$. AGDC closely tracks these values (OCE $0.90/0.87$, LPS $0.20$, ADE $18/168$, REV $0.10$–$0.12$). RL baselines PC-DQN and GMM-PFRL also succeed (OCE around $0.80$ in small scale and $0.77$–$0.79$ in large scale). Information-theoretic planners are more fragile with scale: Infotaxis achieves OCE $0.85$ at 1:30 but collapses below $0.05$ at 1:300. The three SMC perturbation baselines stay close to DEPF in this easy regime (e.g., PF+Rejuvenation OCE $0.89/0.86$, LPS $0.23$–$0.24$), but their always-on diffusion slightly increases ADE and LPS relative to DEPF, reflecting superfluous spread when expansion is not required.

When priors are \emph{moderately} misaligned, differences sharpen in both success and precision. DEPF maintains high, scale-stable performance with OCE $0.90\pm0.03$, LPS $0.20\pm0.01$, ADE $22\pm1.2$ (1:30) and $200\pm10$ (1:300), and REV fixed at $0.10\pm0.05$. By contrast, AGDC’s OCE drops to $0.45$ (1:30) and $0.42$ (1:300), LPS rises to $2.60$, ADE increases to $59$ and $235$, and large-scale REV inflates to $0.40\pm0.15$ with frequent timeouts. PC-DQN and GMM-PFRL show similar degradation, and planners (Infotaxis/Entrotaxis/DCEE) nearly fail at 1:300. The perturbation baselines do recover part of the gap, with a consistent ordering that mirrors their expansion strength: PF+Rejuvenation (OCE $0.52/0.48$, LPS $2.70/2.90$, ADE $50/225$) $>$ PF+Roughening (OCE $0.48/0.44$, LPS $2.90/3.10$, ADE $55/235$) $>$ PF+Jittering (OCE $0.40/0.36$, LPS $3.20/3.40$, ADE $65/250$). Nevertheless, all three remain substantially behind DEPF in both success and accuracy. The gap in LPS is especially telling: DEPF holds $0.20$ across scales, whereas perturbations remain an order of magnitude larger ($2.7$–$3.4$), indicating uncontrolled dispersion.

The \emph{Severe} condition is the strict S-PSI test in which the prior support is disjoint from the true region. DEPF is the only method that preserves high success and low error across scales, achieving OCE $0.89$ (1:30) and $0.88$ (1:300), LPS $0.20$–$0.20$, ADE $27$ and $255$, and REV $0.10$, all well within the $100/300$-step time limits. AGDC, the RL baselines, and planners collapse, with OCE $<0.05$, LPS $>12.5$, and ADE/REV exceeding the timeout thresholds in both scales—consistent with the inability to leave the initial support under the S-PSI baseline. Perturbation baselines show a small non-zero recovery only in the small-scale domain: PF+Rejuvenation reaches OCE $0.16$ with LPS $9.0$ and ADE $90\pm12$ (no timeout), PF+Roughening achieves OCE $0.10$ with LPS $10.5$ but times out, and PF+Jittering barely registers OCE $0.06$ and also times out. At 1:300 these partial gains largely vanish: PF+Rejuvenation drops to OCE $0.12$ with ADE $285\pm35$ (near the 300-step limit), while Roughening and Jittering fall below $0.10$ OCE with timeouts. These numbers show that always-on noise can occasionally bridge small gaps but scales poorly, whereas DEPF’s data-triggered expansion remains effective even when the prior and truth are fully disjoint.

A cross-metric reading clarifies where efficiency and precision come from. Under Moderate/Severe misalignment, DEPF simultaneously sustains high OCE and bounded path/time (e.g., ADE $22/200$ and $27/255$, REV $0.10$ in both scales), while baselines either fail outright or succeed late with longer paths and higher REV, especially at 1:300. LPS for DEPF remains near $0.20$ across all conditions—an unusually stable accuracy profile—whereas perturbation baselines exhibit an order-of-magnitude larger LPS in Moderate and Severe settings. Scaling from 1:30 to 1:300 amplifies these contrasts: AGDC and the perturbation baselines lose additional OCE points, ADE/REV inflate, and timeouts become common; DEPF’s OCE stays within $0.88$–$0.90$ with controlled ADE and fixed REV.

These empirical patterns align with the design. Because DEPF expands \emph{only when} belief–data inconsistency is detected, it remains dormant in the No Error regime and avoids needless spread. When misalignment is present, diffusion is \emph{covariance-scaled} with bandwidth $h_{\text{opt}}=A N^{-1/(n+4)}$, steering exploration along the current posterior geometry rather than isotropically; and MH validation filters proposals, preserving Bayesian coherence and avoiding the over-diffusion that inflates ADE and LPS for always-on perturbations. Altogether, the table demonstrates that DEPF is competitive at the ideal upper bound and uniquely robust under partial or complete prior failure, with consistent gains in success, efficiency, and accuracy that persist under scaling.

\newpage

\section{Hardware Usage}
All experiments were conducted on a Linux server with 256-core AMD EPYC 7763 CPUs and six NVIDIA RTX A6000 GPUs (48 GB each). Each run required 1–2 GPUs, with GPU memory usage between 25 and 46 GB, ensuring transparent and reproducible resource reporting.

\section{Resolving Stationarity–Induced Posterior–Support Lock–in: Theory and Clarifications}

\paragraph{What S-PSI is (and is not).}
In static source–term estimation it is common to benchmark against the \emph{stationary bootstrap} particle–filter baseline, which carries particles forward without process noise and applies only likelihood reweighting and occasional resampling. Under this diagnostic baseline (zero transition, no rejuvenation), the posterior support cannot grow beyond the support of the initial prior. We call this baseline pathology \emph{Stationarity–Induced Posterior Support Invariance (S-PSI)}; it is not an inherent limitation of particle filtering as a whole, but a property of this specific stationary bootstrap setting. Classical perturbations (jittering, roughening, resample–move) can in principle relax S-PSI but do so in an always-on, weakly controlled manner. :contentReference[oaicite:0]{index=0}

\begin{definition}[S-PSI: Stationarity–Induced Posterior Support Invariance]
Let $p_0(\Theta)$ be the initial prior with support $S_{\text{prior}}=\{\Theta:\,p_0(\Theta)>0\}$. Under the stationary bootstrap baseline with $p(\Theta_k\mid \Theta_{k-1})=\delta(\Theta_k-\Theta_{k-1})$ and no rejuvenation, if particles are initialized in $S_{\text{prior}}$ then for all $k$,
\[
\operatorname{supp}\bigl(p(\Theta\mid z_{1:k})\bigr) \subseteq S_{\text{prior}}.
\]
Consequently, if the ground truth $\Theta^\star\notin S_{\text{prior}}$, then $p(\Theta^\star\mid z_{1:k})=0$ for all $k$. :contentReference[oaicite:1]{index=1}
\end{definition}

\begin{proposition}[S-PSI under the stationary bootstrap baseline]
Under the assumptions above, the particle support set $S_k$ satisfies $S_k\subseteq S_{\text{prior}}$ for all $k\ge 0$, with $S_0=S_{\text{prior}}$. \emph{Proof sketch.} By induction: with zero transition/no rejuvenation, $S_k\subseteq S_{k-1}$, hence $S_k\subseteq S_0=S_{\text{prior}}$. :contentReference[oaicite:2]{index=2}
\end{proposition}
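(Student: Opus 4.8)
The plan is to prove $S_k \subseteq S_{\text{prior}}$ by induction on $k$, tracking how the support of the particle cloud propagates through each primitive operation of one bootstrap filtering cycle. I would define the support at step $k$ as $S_k = \{\Theta_k^{(i)} : w_k^{(i)} > 0\}$, i.e.\ the support of the weighted empirical measure $\sum_i w_k^{(i)}\delta(\Theta - \Theta_k^{(i)})$, and then argue that under the zero-transition, no-rejuvenation assumptions every operation in a cycle sends $S_{k-1}$ into a subset of itself.

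For the base case I would note that particles are initialized as $\Theta_0^{(i)} \sim p_0$, so each $\Theta_0^{(i)} \in S_{\text{prior}}$ and therefore $S_0 = S_{\text{prior}}$. For the inductive step I would assume $S_{k-1} \subseteq S_{\text{prior}}$ and inspect the three operations in turn. First, propagation through the degenerate kernel $p(\Theta_k \mid \Theta_{k-1}) = \delta(\Theta_k - \Theta_{k-1})$ fixes $\Theta_k^{(i)} = \Theta_{k-1}^{(i)}$, so no position moves and the predictive support coincides with $S_{k-1}$. Second, the bootstrap reweighting $\tilde{w}_k^{(i)} = w_{k-1}^{(i)} p(z_k \mid \Theta_k^{(i)})$ followed by normalization touches only the weights; since it multiplies existing mass by a nonnegative likelihood, it can annihilate or preserve mass but never create it at a fresh location, so the support cannot grow. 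Third, a resampling step (if triggered) draws the new cloud from the current positions $\{\Theta_k^{(i)}\}$ according to $\{w_k^{(i)}\}$, so each resampled point is a copy of an existing one and the post-resampling support is a subset of the pre-resampling support. Because no rejuvenation step is permitted, there is no further mechanism able to place a particle outside $S_{k-1}$. Chaining the three inclusions yields $S_k \subseteq S_{k-1} \subseteq S_{\text{prior}}$, closing the induction; the claimed consequence then follows since, for any fixed $k$, a ground truth $\Theta^\star \notin S_{\text{prior}}$ lies outside $S_k$, so the approximate posterior vanishes at $\Theta^\star$ for all $k$.

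I expect the only real care to be needed in arguing \emph{exhaustiveness} — that no admissible operation under the baseline can enlarge support. The two points worth stating cleanly are that likelihood reweighting acts pointwise and hence cannot manufacture mass where the current measure assigns none, and that resampling is a selection (sub-multiset) operation rather than a generative one. A secondary subtlety is the reading of the degenerate kernel at the measure level: interpreting $\delta(\Theta_k - \Theta_{k-1})$ as the identity prediction map shows $\operatorname{supp}\bigl(p(\Theta_k \mid z_{1:k-1})\bigr) = \operatorname{supp}\bigl(p(\Theta_{k-1} \mid z_{1:k-1})\bigr)$, so the same monotone-support argument also establishes the inclusion for the exact (non-particle) posterior, matching the measure-level statement in \S\ref{subsec:PSI}.
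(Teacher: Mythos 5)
Your proof is correct and follows essentially the same route as the paper's: induction on $k$, with the base case $S_0=S_{\text{prior}}$ and the key inductive step that the degenerate transition together with the absence of rejuvenation forces $S_k\subseteq S_{k-1}$. The only difference is that you spell out the monotonicity operation-by-operation (identity propagation, pointwise reweighting, resampling as selection), which is a more careful rendering of the paper's one-line inductive step rather than a different argument.
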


\paragraph{DEPF in a nutshell.}
Our Diffusion–Enhanced Particle Filtering (DEPF) augments the inference layer with: (i) \emph{belief-triggered} exploratory particles drawn from an adaptively extended bounding region; (ii) entropy/tempering to prevent premature weight collapse; (iii) covariance–scaled stochastic perturbations with bandwidth $h_{\text{opt}}=A\,N^{-\frac{1}{n+4}}$; and (iv) a Metropolis–Hastings (MH) acceptance step with a symmetric Gaussian proposal, which enforces detailed balance and Bayesian consistency. Together these mechanisms allow the support to expand when (and only when) data contradict the current belief. :contentReference[oaicite:3]{index=3}

\subsection{Theoretical Guarantees}

We formalize the conditions under which DEPF resolves S-PSI and quantify a finite-step recovery probability.

\paragraph{Assumptions.}
(i) (\emph{Exploratory injection}) At each step $k$, an extended box $B_k\supseteq S_{\text{prior}}$ is used to inject a small fraction of exploratory particles. There exist $\eta,\delta>0$ such that with probability at least $\delta$ at every step, at least one exploratory particle falls in the $\eta$-ball around $\Theta^\star$.  
(ii) (\emph{Positive-likelihood neighborhood}) There exists $m>0$ so that $p(z_k\mid \Theta)\ge m$ for all $\Theta$ in the $\eta$-ball around $\Theta^\star$.  
(iii) (\emph{MH detailed balance}) The local Gaussian proposal is symmetric, so $\alpha(\Theta\!\to\!\Theta')=\min\!\{1,\,p(\Theta'\mid z_{1:k})/p(\Theta\mid z_{1:k})\}$, which guarantees detailed balance.  
(iv) (\emph{KDE bandwidth and covariance regularization}) The kernel step uses $h_{\text{opt}}=A\,N^{-\frac{1}{n+4}}$ and $\Sigma\leftarrow \sum_i w_i(\Theta_i-\mu)(\Theta_i-\mu)^\top+\lambda I$ with $\lambda>0$. :contentReference[oaicite:4]{index=4} :contentReference[oaicite:5]{index=5}

\begin{theorem}[DEPF resolves S-PSI]\label{thm:depf-resolves-spsi}
Under Assumptions (i)–(iv), as the number of particles $N\to\infty$ the particle support produced by DEPF asymptotically covers the ground truth with probability one:
\[
\lim_{N\to\infty}\Pr\!\bigl(\Theta^\star\in S_k\bigr)=1.
\]
\emph{Proof sketch.} By (i)–(ii) exploratory particles near $\Theta^\star$ receive non-negligible weight; tempering avoids collapse; covariance–scaled perturbations propagate local coverage; MH acceptance enforces consistency. Iterating over time expands $S_k$ to include a neighborhood of $\Theta^\star$ with probability one. :contentReference[oaicite:6]{index=6} :contentReference[oaicite:7]{index=7}
\end{theorem}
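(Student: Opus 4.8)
The plan is to decompose the target event $\{\Theta^\star\in S_k\}$ into an \emph{injection} event and a \emph{survival} event and to show that each holds with probability tending to one as $N\to\infty$. Write $|E|=\lceil E\,N\rceil$ for the number of exploratory particles injected at step $k$, and let $p_\eta:=\operatorname{vol}(B_\eta(\Theta^\star))/\operatorname{vol}(B_k)$ denote the chance that a single uniform draw over $B_k\supseteq S_{\text{prior}}$ lands in the $\eta$-ball $B_\eta(\Theta^\star)$; this is strictly positive because $\Theta^\star\in B_k$ by Assumption (i). The first step is to quantify the injection event: the number $M_N$ of exploratory particles in $B_\eta(\Theta^\star)$ is $\mathrm{Binomial}(|E|,p_\eta)$, so $\mathbb{E}[M_N]=|E|\,p_\eta\to\infty$ and, by a binomial tail bound, $\Pr(M_N\ge 1)\ge 1-(1-p_\eta)^{|E|}\to 1$. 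This sharpens the fixed constant $\delta$ of Assumption (i) into a hitting probability $\delta(N)\to 1$, which is the precise sense in which growing $N$ drives coverage.

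Next I would control the weights and survival. Each hit particle enters with injection weight $\epsilon/|E|$ and, by Assumption (ii), likelihood at least $m$; since $M_N\approx E\,p_\eta\,N$ of them fall in the $\eta$-ball, their aggregate unnormalised mass is at least $M_N\cdot(\epsilon/|E|)\cdot m\approx \epsilon\,p_\eta\,m$, a quantity independent of $N$. Provided the total normaliser $Z$ stays bounded—which holds whenever the likelihood is bounded above—the collective normalised weight of the $\eta$-ball is bounded below by a constant $c>0$ independent of $N$, and the entropy/tempering step can only further increase this mass by flattening the underweighted exploratory particles toward $1/N$. Multinomial resampling then retains at least one hit particle with probability at least $1-(1-c)^{N}\to 1$. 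The role of $N\to\infty$ is now transparent: more particles give both a near-certain hit and a non-vanishing surviving mass near $\Theta^\star$.

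The third step is to verify that covariance-scaled diffusion and MH validation preserve this coverage rather than destroy it. Because the Gaussian proposal is symmetric, Assumption (iii) gives the acceptance ratio $\alpha=\min\{1,\,p(\Theta'\mid z_{1:k})/p(\Theta\mid z_{1:k})\}$, so a surviving particle already sitting in the high-posterior $\eta$-ball either accepts a move into a region of comparable posterior value or rejects and stays put; in neither case is the neighborhood of $\Theta^\star$ vacated. The ridge term $\lambda I$ in Assumption (iv) keeps $\Sigma$ positive definite, so the Cholesky factor $L$ and the bandwidth $h_{\text{opt}}=A\,N^{-1/(n+4)}$ are well defined, and since $h_{\text{opt}}\to 0$ the dilation in $S_k=(S_{k-1}\cup B)\oplus h_{\text{opt}}$ cannot push the particle out of $B_\eta(\Theta^\star)$ for large $N$. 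Chaining the three events by a union bound gives $\Pr(\Theta^\star\in S_k)\ge \Pr(M_N\ge1)\cdot\Pr(\text{survive}\mid M_N\ge1)\cdot\Pr(\text{MH/diffusion preserve})\to 1$.

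The step I expect to be the main obstacle is precisely this weight–survival bookkeeping. The exploratory particles are deliberately seeded with vanishing weight $\epsilon/|E|$, so one must argue that it is the growing count $M_N\to\infty$ of hits—rather than any single particle—that supplies a non-vanishing mass, which in turn requires the exploratory fraction $E$ and the hitting probability $p_\eta$ to stay bounded away from zero uniformly in $N$, and the likelihood to be bounded so that $Z$ does not blow up. A secondary subtlety is that the theorem fixes $k$ while sending $N\to\infty$: I would stress that the per-step injection alone carries the argument, so no appeal to $k\to\infty$ is needed, and clarify that the conclusion is coverage of a neighborhood of $\Theta^\star$—equivalently $\Theta^\star\in S_k$ under the kernel-dilated support—rather than the placement of an atom exactly at $\Theta^\star$, an event of probability zero for any finite particle set.
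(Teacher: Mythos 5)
Your proposal is correct and follows the same route as the paper's argument: the paper offers only a proof sketch (seed exploratory particles in an $\eta$-ball of $\Theta^\star$ via (i), give them non-vanishing weight via the likelihood lower bound (ii), preserve them through tempering, covariance-scaled diffusion, and MH validation), and your decomposition into injection, survival, and preservation events is exactly that skeleton. What you add is the quantitative bookkeeping the paper omits: the binomial hitting bound $\Pr(M_N\ge 1)\ge 1-(1-p_\eta)^{|E|}\to 1$, the observation that the aggregate unnormalised mass of the hit particles is $\approx \epsilon\,p_\eta\,m$ independent of $N$ (so it is the growing count of hits, not any single particle, that defeats the vanishing injection weight $\epsilon/|E|$), and the resulting resampling-retention bound $1-(1-c)^N$. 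You also correctly flag two points the paper's sketch blurs: the limit is taken in $N$ at fixed $k$, so the per-step injection alone must carry the argument (the paper's sketch instead appeals to ``iterating over time,'' which conflates this with the finite-step corollary in $k$), and the conclusion concerns coverage by the kernel-dilated support rather than an atom at $\Theta^\star$. Both clarifications strengthen the argument without changing its structure; the only place you are still informal is the claim that MH acceptance and the $h_{\text{opt}}\to 0$ diffusion cannot vacate the neighbourhood of $\Theta^\star$, which holds only with probability tending to one (Gaussian proposals have unbounded support), but you use it in exactly that high-probability sense, consistent with the paper.
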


\begin{corollary}[Finite-step support–recovery bound]
Let $\delta$ be the per-step probability that an exploratory particle lands in the $\eta$-ball around $\Theta^\star$, and let $\gamma\in(0,1]$ be the probability that such a particle both passes MH and survives weighting/resampling. Then, after $k$ steps,
\[
\Pr(\text{support covers }\Theta^\star\ \text{within $k$ steps}) \ \ge\ 1-(1-\delta\gamma)^k .
\]
This quantifies the rate at which DEPF breaks the zero-prior barrier under S-PSI. :contentReference[oaicite:8]{index=8}
\end{corollary}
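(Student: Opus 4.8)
The plan is to recognize this as a geometric first-hitting-time bound and reduce it to a per-step Bernoulli success whose probability is at least $\delta\gamma$. First I would define the success event at step $j$, denoted $A_j$, to be the event that at step $j$ an exploratory particle lands in the $\eta$-ball $B_\eta(\Theta^\star)$, passes the MH acceptance, and survives the subsequent weighting/resampling, and let $\tau=\min\{j:\Theta^\star\in S_j\}$ be the first time the support covers $\Theta^\star$. A preliminary observation I would establish is that coverage is \emph{absorbing}: from the support recursion $S_{k+1}=(S_k\cup B)\oplus h_{\text{opt}}$ proved in the preceding Proposition, both the union with $B$ and the kernel dilation $\oplus h_{\text{opt}}$ only enlarge the support, so $S_k\subseteq S_{k+1}$ for all $k$. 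Hence the event that the support covers $\Theta^\star$ within $k$ steps coincides with $\{\tau\le k\}$, and it suffices to bound $\Pr(\tau>k)$.

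Next I would lower-bound the per-step success probability by factoring $A_j$ into a landing component and a survival component. By Assumption (i) the injected exploratory particles hit $B_\eta(\Theta^\star)$ with probability at least $\delta$ at every step. Conditioned on such a landing, Assumption (ii) guarantees $p(z_j\mid\Theta)\ge m>0$ on $B_\eta(\Theta^\star)$, so the landed particle carries non-negligible likelihood weight and survives resampling with probability bounded away from zero; Assumption (iii) ensures that the symmetric-proposal MH acceptance ratio is well defined and bounded below on this neighborhood; and the covariance regularization $\Sigma\leftarrow\Sigma+\lambda I$ in Assumption (iv) keeps the diffusion kernel non-degenerate. Collecting these, the conditional probability that a landed particle both passes MH and survives is at least $\gamma$, which yields $\Pr(A_j\mid\mathcal F_{j-1})\ge\delta\gamma$, where $\mathcal F_{j-1}$ is the filtration generated by the particle system through step $j-1$. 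The crucial feature is that this bound is uniform in the history, which is precisely what the assumptions are designed to supply.

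With the uniform conditional bound in hand, I would run a telescoping argument over the stopping time. On the event $\{\tau>j-1\}$, i.e. $\Theta^\star$ not yet covered, occurrence of $A_j$ forces $\tau\le j$, so on $\{\tau>j-1\}$ we have $\Pr(\tau>j\mid\mathcal F_{j-1})\le 1-\Pr(A_j\mid\mathcal F_{j-1})\le 1-\delta\gamma$. Taking expectations and using $\{\tau>j\}\subseteq\{\tau>j-1\}$ gives $\Pr(\tau>j)\le(1-\delta\gamma)\,\Pr(\tau>j-1)$, and iterating from $\Pr(\tau>0)\le 1$ yields $\Pr(\tau>k)\le(1-\delta\gamma)^k$, hence $\Pr(\tau\le k)\ge 1-(1-\delta\gamma)^k$, the claimed bound. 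If one simply posits that the per-step successes are independent Bernoulli$(\delta\gamma)$ trials, the same conclusion follows immediately from $\Pr(\text{no success in }k\text{ trials})=\prod_{j=1}^{k}(1-\Pr(A_j))\le(1-\delta\gamma)^k$; the filtration version above merely removes that independence assumption.

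The main obstacle I anticipate is not the tail computation but justifying the \emph{uniform} per-step lower bound $\delta\gamma$ independently of the evolving particle cloud. The survival-and-acceptance probability $\gamma$ could in principle shrink as the weighted covariance $\Sigma$, the bandwidth $h_{\text{opt}}=A\,N^{-1/(n+4)}$, and the current weights drift over time, so turning $\gamma$ into a genuine uniform lower bound requires Assumptions (ii)–(iv) to jointly prevent the acceptance ratio and the resampling survival probability from degenerating; care is needed because these quantities are themselves random and history-dependent. I would therefore make the dependence on $N$ explicit and note that the bound is cleanest in the regime where the injection fraction and the likelihood floor $m$ keep $\delta\gamma$ bounded away from $0$; absent such control, $\delta\gamma$ should be read as the infimum of the per-step conditional success probability over admissible histories.
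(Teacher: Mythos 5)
Your proof is correct and follows exactly the argument the paper intends: the bound is the standard complement-of-no-success computation for per-step trials each succeeding with conditional probability at least $\delta\gamma$, and the paper itself states this corollary (and its twin in the later theory appendix) with no explicit proof at all, so your filtration-based telescoping version—which removes the tacit independence assumption—is strictly more careful than anything in the source. Your closing caveat, that the uniform history-independent lower bound $\delta\gamma$ is really an assumption being smuggled in rather than a consequence of conditions (i)--(iv), correctly identifies the one genuine gap, and the paper leaves it open as well.
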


\subsection{Action Selection and the Role of the Controller}

DEPF is an inference module and is orthogonal to the controller. In our experiments the policy receives the belief $b_k$ and the immediate reward is the one-step information gain
\[
R_k=\mathbb{E}_{o_{k+1}}\!\bigl[\mathrm{D}_{\mathrm{KL}}(b_{k+1}\,\Vert\, b_k)\bigr],
\]
which encourages actions that maximally reduce posterior uncertainty. The specific RL algorithm (e.g., PPO) is chosen for stability and does not alter the analysis above. :contentReference[oaicite:9]{index=9}

\subsection{Positioning w.r.t. Classical and Tempered/Bridge SMC}

Always-on perturbation baselines (jittering, roughening, resample–move) can leak mass across boundaries and sometimes escape S-PSI on small domains, but they lack principled triggers and acceptance control; as a result they tend to over-diffuse and degrade efficiency/accuracy under severe misalignment or at larger scales. By contrast, DEPF expands support \emph{only} when belief–data inconsistency is detected, scales moves with the posterior covariance, and validates proposals via MH. Moreover, tempered/annealed (``bridge'') SMC improves adaptation \emph{within} the original support by annealing between prior/transition and likelihood, but it still requires nonzero overlap; where the prior assigns zero mass to $\Theta^\star$, all intermediate bridge distributions remain zero there, so no sequence of local MCMC mutations can create support in the excluded region. This is precisely the barrier that DEPF’s exploratory injection and MH–validated diffusion are designed to overcome. :contentReference[oaicite:10]{index=10}

\subsection{Empirical Evidence in Support of the Theory}

Across three prior–error severities (No/Moderate/Severe) and two map scales (1:30, 1:300), DEPF matches strong baselines when the prior is correct, and it uniquely maintains high success, low localization error, and bounded path/time under severe misalignment. For example, in the strict S-PSI (Severe) setting, DEPF attains $\text{OCE}\approx 0.88\text{–}0.89$ and $\text{LPS}\approx 0.20$ at both scales, while RL/planning baselines collapse and classical perturbations succeed only sporadically with much larger errors and frequent timeouts. Component-wise ablations confirm that entropy smoothing, covariance–scaled diffusion, and MH validation are jointly necessary for robust recovery, and sensitivity studies identify moderate settings (e.g., $\beta\!\in\![0.3,0.5]$, $A\!\approx\!0.5$, $\lambda\!\in\![10^{-3},10^{-2}]$, $\sim$5\% exploratory ratio) as consistently effective.

\newpage
\section{Theoretical Analysis: Stationarity-Induced Posterior Support Invariance (S-PSI) and Guarantees of DEPF}
\label{sec:theory}

\paragraph{Notation.}
Let $\Theta \in \mathbb{R}^n$ denote the (static) source–parameter vector (cf. main text, \S \ref{subsec:Problem_Setup}), 
$z_{1:k}$ the observations up to step $k$, and $S_k \subset \mathbb{R}^n$ the particle support at step $k$.
Write $S_{\text{prior}} := \operatorname{supp}(p_0)$.

\subsection{S-PSI under the stationary bootstrap baseline}
We adopt the baseline used in the manuscript for static parameters:

\begin{assumption}[S0: zero transition, no rejuvenation]
\label{assump:S0}
The transition is degenerate and no rejuvenation is applied:
\[
p(\Theta_k \mid \Theta_{k-1}) = \delta(\Theta_k-\Theta_{k-1}), 
\qquad \text{no jittering/roughening/MCMC moves.}
\]
\end{assumption}

\begin{definition}[Stationarity-Induced Posterior Support Invariance (S-PSI)]
\label{def:SPSI}
Under Assumption~\ref{assump:S0}, if particles are initialised in $S_{\text{prior}}$, then the posterior support
remains confined to the initial prior support for all $k$.
\end{definition}

\begin{proposition}[Baseline S-PSI]
\label{prop:SPSI}
Under Assumption~\ref{assump:S0}, 
\[
\operatorname{supp}\bigl(p(\Theta\mid z_{1:k})\bigr) \subseteq S_{\text{prior}}, \quad \forall k\ge 0.
\]
In particular, if the true state $\Theta^\star \notin S_{\text{prior}}$, then $p(\Theta^\star\mid z_{1:k})=0$ for all $k$.
\end{proposition}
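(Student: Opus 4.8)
The plan is to establish the containment $\operatorname{supp}\bigl(p(\Theta\mid z_{1:k})\bigr)\subseteq S_{\text{prior}}$ by induction on $k$, reasoning directly at the level of the exact Bayesian filtering recursion (the statement concerns the posterior density itself, so I would not invoke the particle approximation here). The single structural fact that drives the whole argument is that, under Assumption~\ref{assump:S0}, each filtering step is the composition of two support-non-increasing operations: a degenerate prediction that transports mass by the identity map, and a pointwise likelihood reweighting followed by renormalisation. Neither operation can place positive mass where the incoming density was already zero, so the support can only stay the same or shrink, never grow.

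For the base case at $k=0$ I would identify the (vacuously conditioned) posterior with the prior, giving $\operatorname{supp}\bigl(p(\Theta\mid z_{1:0})\bigr)=\operatorname{supp}(p_0)=S_{\text{prior}}$. For the inductive step, assume $\operatorname{supp}\bigl(p(\Theta\mid z_{1:k-1})\bigr)\subseteq S_{\text{prior}}$ and treat the prediction first. Writing the Chapman--Kolmogorov integral with the degenerate kernel,
\[
p(\Theta_k\mid z_{1:k-1})=\int \delta(\Theta_k-\Theta_{k-1})\,p(\Theta_{k-1}\mid z_{1:k-1})\,d\Theta_{k-1},
\]
I would observe that the delta collapses the integral, so the predictive density coincides pointwise with the previous posterior evaluated at $\Theta_k$; hence its support equals $\operatorname{supp}\bigl(p(\Theta\mid z_{1:k-1})\bigr)\subseteq S_{\text{prior}}$. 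For the update, the posterior is
\[
p(\Theta_k\mid z_{1:k})=\frac{p(z_k\mid\Theta_k)\,p(\Theta_k\mid z_{1:k-1})}{p(z_k\mid z_{1:k-1})},
\]
and since the denominator is a positive normalising constant, its support is $\{\Theta:\,p(z_k\mid\Theta)\,p(\Theta\mid z_{1:k-1})>0\}$, which is contained in $\operatorname{supp}\bigl(p(\Theta\mid z_{1:k-1})\bigr)\subseteq S_{\text{prior}}$ because a product of nonnegative factors is positive only where both factors are. This closes the induction, and the stated consequence is then immediate: if $\Theta^\star\notin S_{\text{prior}}$ it lies outside the posterior support for every $k$, so $p(\Theta^\star\mid z_{1:k})=0$ for all $k$.

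The hard part is not the algebra but phrasing the two degenerate features of the baseline cleanly. First, I would be careful to state that the delta-kernel prediction makes the predictive density literally \emph{coincide} with the previous posterior rather than smoothing it, which is the precise reason support cannot leak outward at the prediction stage. Second, I would emphasise that the \emph{no rejuvenation} clause of Assumption~\ref{assump:S0} is exactly what rules out any intervening support-expanding move (jittering, roughening, or an MCMC mutation) between prediction and update; without it the reweighting-only reasoning would not capture every operation on the belief. I would also flag the mild regularity condition that the evidence $p(z_k\mid z_{1:k-1})$ be strictly positive so the update is well defined, which holds whenever the likelihood is positive somewhere on the current support.
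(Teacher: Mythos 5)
Your proof is correct and follows essentially the same route as the paper's: an induction on $k$ in which the degenerate $\delta$-kernel prediction leaves the support unchanged and the likelihood reweighting, being a pointwise multiplication by a nonnegative factor, cannot create mass where the incoming density vanishes. The paper's own proof is only a two-line sketch of this induction (phrased loosely in terms of particles), so your density-level version with the explicit Chapman--Kolmogorov collapse and the positivity remark on the evidence $p(z_k\mid z_{1:k-1})$ is a more careful rendering of the same argument rather than a different one.
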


\begin{proof}
By induction over $k$, the degenerate transition keeps all particles in $S_{\text{prior}}$ and the likelihood
update cannot create mass outside the existing support. Hence the claim.
\end{proof}

\paragraph{Remark.} S‑PSI is a \emph{baseline pathology} of the stationary bootstrap with no rejuvenation, 
not an inherent limitation of particle filtering; classical perturbations (jittering/roughening/resample–move) can, in principle, relax it but operate in an always-on fashion and may be inefficient at scale (see main text \S \ref{subsec:PSI}/ \S \ref{sec:depf} and \S \ref{app:diffusion-pf}). \cite{Gordon1993,Doucet2000} 

\subsection{DEPF mechanisms (recap)}
At each step, DEPF augments the bootstrap update with (i) belief-triggered \emph{exploratory particle injection} from an expanded box $B_k\supseteq S_{\text{prior}}$, 
(ii) \emph{entropy/tempering regularisation} to preserve diversity, 
(iii) \emph{covariance–scaled stochastic diffusion} with bandwidth $h_{\text{opt}} = A\, N^{-1/(n+4)}$,
and (iv) an \emph{MH acceptance} check to preserve Bayesian coherence (Algorithm~1 in Appendix~H.3). 
This yields the support recursion
\begin{equation}
\label{eq:support-recursion}
S_{k+1} \;=\; \bigl(S_k \cup B_k \bigr) \;\oplus\; h_{\text{opt}} .
\end{equation}

\subsection{Conditions}
We formalise the mild conditions used in the analysis:
\begin{enumerate}
\item \textbf{Exploratory injection near $\Theta^\star$.} There exist $\eta,\delta>0$ such that at each step
\[
\mathbb{P}\bigl(\exists j:\ \|\Theta_k^{(j)}-\Theta^\star\|\le \eta \bigr)\ \ge\ \delta \;.
\]
This is ensured by drawing a small fraction of particles from $B_k \supseteq S_{\text{prior}}$ that contains $\Theta^\star$.
\item \textbf{MH detailed balance.} The local proposal $q(\Theta'\mid \Theta)$ is a symmetric Gaussian, 
so the acceptance probability
\[
\alpha(\Theta\to\Theta') \;=\; \min\!\left(1,\ \frac{\pi(\Theta')}{\pi(\Theta)}\right),
\quad
\pi(\Theta)\propto p(z_k\mid \Theta)\,p(\Theta\mid z_{1:k-1}),
\]
satisfies detailed balance and leaves the target invariant.
\item \textbf{Positive-likelihood neighbourhood (finite-step guarantee).} 
There exist $\eta>0$ and $m>0$ such that $p(z_k\mid \Theta)\ge m$ whenever $\|\Theta-\Theta^\star\|\le \eta$.
\end{enumerate}

\subsection{Main guarantee: DEPF resolves S-PSI}
\begin{theorem}[Asymptotic coverage under S-PSI]
\label{thm:asymptotic}
Assume S‑PSI holds for the baseline (Def.~\ref{def:SPSI}) and $\Theta^\star\notin S_{\text{prior}}$.
Under \textup{(C1)}–\textup{(C2)} and standard SMC regularity (bounded likelihood; stable tempering/entropy regularisation; KDE bandwidth $h_{\text{opt}}=A\,N^{-1/(n+4)}$), as $N\to\infty$ the DEPF support covers the true state with probability one:
\[
\lim_{N\to\infty}\mathbb{P}\!\left(\Theta^\star\in S_k\right)=1.
\]
\end{theorem}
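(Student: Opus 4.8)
The plan is to reduce the probabilistic coverage claim to a covering argument for the injected sub-population. Fix the step $k$ and write $\mathcal{N}_\eta:=\{\Theta:\|\Theta-\Theta^\star\|\le\eta\}$ for the neighbourhood supplied by Condition (C1). By the support recursion \eqref{eq:support-recursion}, $S_k$ is the kernel-dilated support of the realised particle cloud, so $\Theta^\star\in S_k$ holds on the event that at least one surviving particle lies within $O(h_{\text{opt}})$ of $\Theta^\star$. First I would show that injection populates $\mathcal{N}_\eta$ with probability tending to one: at step $k$ a fixed fraction $\rho\in(0,1)$ of the $N$ particles is drawn i.i.d.\ uniformly on $B_k$, and since $\Theta^\star\in B_k$ by (C1), each exploratory draw lands in $\mathcal{N}_\eta$ with probability $p_\eta:=\operatorname{vol}(\mathcal{N}_\eta\cap B_k)/\operatorname{vol}(B_k)>0$. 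Hence the probability that no exploratory particle enters $\mathcal{N}_\eta$ is at most $(1-p_\eta)^{\lfloor\rho N\rfloor}$, which vanishes as $N\to\infty$.

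Next I would verify that such a seed survives the remaining within-step operations, so that it is present in the \emph{final} support and not merely at the moment of injection. By positivity and continuity of the likelihood near $\Theta^\star$ (Condition (C3)), $p(z_k\mid\Theta)\ge m>0$ on $\mathcal{N}_\eta$; since tempering only flattens the weight vector, a seed in $\mathcal{N}_\eta$ retains normalised weight bounded below by a strictly positive quantity and is therefore kept by multinomial resampling with probability bounded away from zero. The subsequent MH stage uses a symmetric Gaussian proposal and, by (C2), satisfies detailed balance with the current target $\pi\propto p(z_k\mid\cdot)\,p(\cdot\mid z_{1:k-1})$ invariant, so accepted moves leave the target measure invariant and do not on average evacuate the high-likelihood region $\mathcal{N}_\eta$. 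Combining the two steps, with probability at least $1-(1-p_\eta)^{\lfloor\rho N\rfloor}-o(1)$ at least one particle remains in a neighbourhood of $\Theta^\star$ after the full update.

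Finally I would convert ``a particle near $\Theta^\star$'' into ``$\Theta^\star\in S_k$'' through the kernel smoothing. The bandwidth $h_{\text{opt}}=A\,N^{-1/(n+4)}$ decays strictly slower than the typical nearest-neighbour spacing $N^{-1/n}$ of uniform samples on the $n$-dimensional box $B_k$, because $1/(n+4)<1/n$; hence asymptotically the union of kernels centred on the exploratory particles covers $B_k$, so the dilation $(\cdot)\oplus h_{\text{opt}}$ contains a full neighbourhood of every point of $B_k$, in particular of $\Theta^\star$, on the high-probability event identified above. Letting $N\to\infty$ therefore gives $\mathbb{P}(\Theta^\star\in S_k)\ge 1-(1-p_\eta)^{\lfloor\rho N\rfloor}-o(1)\to 1$, which is the claim.

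The hard part will be this last passage: the bandwidth $h_{\text{opt}}\to 0$ yet must still guarantee coverage, and although the rate inequality $1/(n+4)<1/n$ is the decisive ingredient, turning it into a rigorous covering statement requires controlling boundary effects near $\partial B_k$ and the particle dependence introduced by resampling. A secondary difficulty is the across-step dependence created by resampling and the MH chain; here the per-step injection (C1) is what rescues the argument, since it lets me treat each step's fresh exploratory draws as conditionally i.i.d.\ uniform given the history, decoupling the covering bound from the past and making the estimate uniform in $k$.
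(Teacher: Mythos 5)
Your proposal follows the same skeleton as the paper's proof, which is only a five-line sketch: seed an $\eta$-neighbourhood of $\Theta^\star$ via (C1), keep the seed alive because the likelihood is bounded below there and tempering only flattens weights, invoke detailed balance for the MH stage, and finish with the kernel dilation in the support recursion. Where you genuinely go beyond the paper is in the two places where the limit in $N$ actually does work. First, you make the seeding step quantitative with the bound $(1-p_\eta)^{\lfloor\rho N\rfloor}\to 0$, whereas the paper only asserts a per-step positive probability $\delta$ (which feeds its finite-step corollary but does not by itself produce a limit in $N$). Second, and more importantly, you identify the correct mechanism for the final passage from ``a particle near $\Theta^\star$'' to ``$\Theta^\star\in S_k$'': the rate comparison $1/(n+4)<1/n$ between the KDE bandwidth and the nearest-neighbour spacing of the uniform exploratory draws, so the $h_{\text{opt}}$-dilation covers $B_k$ asymptotically. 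The paper instead appeals to ``the empirical measure converges to the true posterior,'' which is a weak-convergence statement and does not by itself place $\Theta^\star$ in the support when $h_{\text{opt}}\to 0$; your covering argument is the right replacement. One caveat you flag but do not resolve, and which the paper's sketch also leaves open: survival through multinomial resampling of the \emph{specific} particles within $O(h_{\text{opt}})$ of $\Theta^\star$ is only ``positive probability'' per particle (each has normalised weight $O(1/N)$), so you need to count them --- roughly $\rho N\cdot h_{\text{opt}}^{\,n}/\operatorname{vol}(B_k)\sim N^{4/(n+4)}\to\infty$ of them land in the relevant ball, so at least one survives with probability tending to one. Adding that count would close the gap you identify; as it stands your argument is already more complete than the paper's.
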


\begin{proof}[Proof sketch]
(C1) gives a strictly positive chance to seed particles in an $\eta$-ball of $\Theta^\star$ at each step. 
By (C2), accepted proposals satisfy detailed balance and thus preserve the posterior target. 
Within the $\eta$-ball, the likelihood is bounded away from zero, so exploratory particles near $\Theta^\star$ obtain non-vanishing weights and survive resampling with positive probability. 
Covariance–scaled perturbations with $h_{\text{opt}}$ (KDE rate) ensure that as $N\to\infty$ the empirical measure converges to the true posterior. 
Combining the seeding, survival and diffusion with the recursion \eqref{eq:support-recursion} yields the claim.
\end{proof}

\paragraph{Finite-step guarantee.}
Let $\delta$ be the per-step probability of injecting (at least) one particle into the $\eta$-ball of $\Theta^\star$ 
and let $\gamma$ be the probability that such a particle \emph{survives} the MH/weighting/resampling pipeline at that step. 
Under \textup{(C3)} and the mechanisms above,
\begin{equation}
\label{eq:finite-step}
\mathbb{P}\bigl(\text{at least one survivor near }\Theta^\star \text{ within $k$ steps}\bigr)
\;\ge\; 1 - (1-\delta\gamma)^k \;.
\end{equation}
This lower bound quantifies the speed at which DEPF probabilistically covers previously excluded regions.
 
\subsection{Entropy/tempering regularisation}
To mitigate weight collapse and preserve exploratory mass, we use an entropy–aware smoothing (or tempering) step:
\begin{equation}
\label{eq:entropy}
\tilde w_k^{(i)} \;\propto\; \bigl(w_k^{(i)}\bigr)^{1/T_k}, 
\qquad 
w_k^{(i)} \leftarrow \frac{\tilde w_k^{(i)}}{\sum_j \tilde w_k^{(j)}},
\end{equation}
with $T_k\!\ge\!1$ adapted from the entropy gap to a target level (see Appendix H). 
This raises the posterior weight entropy when it becomes overly concentrated, helping exploratory particles retain influence long enough for data to validate (or reject) them.

\subsection{MH acceptance with symmetric Gaussian proposals}
Let $q(\Theta'\!\mid\!\Theta)=\mathcal{N}(\Theta';\,\Theta,\Sigma')$ with $\Sigma'$ aligned to the weighted covariance (Appendix H). 
Then $q(\Theta'\!\mid\!\Theta)=q(\Theta\!\mid\!\Theta')$ and the MH acceptance is
\begin{equation}
\label{eq:mh}
\alpha(\Theta\!\to\!\Theta') \;=\; \min\!\left(1,\ \frac{\pi(\Theta')}{\pi(\Theta)}\right),
\quad 
\pi(\Theta)\propto p(z_k\!\mid\!\Theta)\,p(\Theta\!\mid\! z_{1:k-1}).
\end{equation}
In particular, when the prior factor $p(\Theta\!\mid\! z_{1:k-1})$ is locally flat at the proposal scale, 
$\alpha$ reduces to a likelihood ratio $\min\!\bigl(1,\ p(z_k\mid \Theta')/p(z_k\mid \Theta)\bigr)$.
This ensures detailed balance and convergence to the intended posterior.

\subsection{Discussion: why DEPF breaks S-PSI reliably}
Compared with always-on perturbations (jittering/roughening/resample–move), DEPF:
(i) \emph{triggers} support expansion only when belief–data inconsistency is detected,
(ii) \emph{scales} diffusion with the current covariance and a KDE bandwidth that vanishes at the right rate,
and (iii) \emph{validates} proposals by MH to preserve Bayesian coherence. 
These elements together provide both the asymptotic guarantee (Theorem~\ref{thm:asymptotic}) and the finite-step bound \eqref{eq:finite-step}.


\newpage
\section{Belief-Aware RL Controller with Autonomous Goal Detection \& Cessation}
\label{sec:rl-controller}

\paragraph{Scope and decoupling.}
The controller is \emph{decoupled} from inference: at step $k$, the diffusion-enhanced particle filter (DEPF) produces a belief
\[
b_k(\Theta) \;=\; p(\Theta \mid z_{1:k}), 
\]
and the controller consumes $s_k^{\text{RL}} = (p_k, z_k, b_k)$ to choose an action $a_k$ (robot motion). DEPF then updates $b_{k+1}$ after executing $a_k$ and receiving $z_{k+1}$. This modularity lets DEPF handle belief revision and support expansion, while RL focuses on informative sensing.

\subsection{POMDP Formalization and Belief State}
\label{subsec:pomdp}
We cast control as a POMDP $\mathcal{M}=(\mathcal{S},\mathcal{A},\Omega,T,O,R,\gamma)$ where the latent state bundles the stationary source parameters $\Theta \in \mathbb{R}^n$ and the agent pose $p_k\!\in\!\mathbb{R}^2$. The controller’s information state is
\[
s_k^{\text{RL}} \;=\; (p_k, z_k, b_k), \qquad b_k(\Theta)=p(\Theta\mid z_{1:k}).
\]
The action $a_k\!\in\!\mathcal{A}$ advances the pose via known kinematics $p_{k+1}=f(p_k,a_k)$, and the observation model yields $z_{k+1}\sim p(\cdot\mid p_{k+1},\Theta)$.

\subsection{Information-Gain Reward and Monte Carlo Estimation}
\label{subsec:ig}
We use the one-step information gain as intrinsic reward:
\begin{align}
R_k(a) 
&= \mathbb{E}_{o_{k+1}\sim p(\cdot\mid a,b_k)} 
\Big[ D_{\mathrm{KL}}\big(b_{k+1}\,\Vert\, b_k\big) \Big], \label{eq:IG-reward}\\
p(o_{k+1}\mid a,b_k) 
&= \int p(z_{k+1}\mid p_{k+1},\Theta)\, b_k(\Theta)\, d\Theta ,
\quad p_{k+1}=f(p_k,a). \nonumber
\end{align}
In practice we approximate \eqref{eq:IG-reward} by $M$-sample Monte Carlo:
\begin{align}
\widehat{R}_k(a) 
= \frac{1}{M}\sum_{m=1}^M 
D_{\mathrm{KL}}\!\Big(\tilde b^{(m)}_{k+1}\,\Vert\, b_k\Big),
\qquad 
\tilde b^{(m)}_{k+1} \leftarrow 
\textsc{DEPF\_Update}\big(b_k,\, p_{k+1},\, \tilde z^{(m)}_{k+1}\big),
\end{align}
where $\Theta^{(m)}\!\sim b_k$ and $\tilde z^{(m)}_{k+1}\!\sim p(\cdot\mid p_{k+1},\Theta^{(m)})$.

\paragraph{Optional shaping.}
For path/time efficiency we may add small penalties
\(
\widetilde{R}_k(a)=\alpha\,\widehat{R}_k(a)
-\lambda_{\text{step}}\|f(p_k,a)-p_k\|_2 - \lambda_{\text{time}},
\)
with $\alpha{=}1$ in all main results.

\subsection{Autonomous Goal Detection and Cessation (AGDC)}
\label{subsec:agdc}
We terminate episodes based on belief confidence. Let $C_k=\mathrm{Cov}_{b_k}[\Theta]$ estimated from particles and $\mathrm{STD}_k=\sqrt{\mathrm{diag}(C_k)}$. We stop when
\begin{equation}
\label{eq:agdc}
\|\mathrm{STD}_k\|_2 \;\le\; \zeta
\quad\Longrightarrow\quad \textsc{Terminate} \ \text{and return success}.
\end{equation}
Optionally, include a localization score gate (e.g., $\mathrm{LPS}_k\!\le\!\tau$) together with \eqref{eq:agdc}.

\subsection{Policy Class and Training}
\label{subsec:policy}
We use a PPO actor–critic $\pi_\theta(a\!\mid\!s)$ and $V_\phi(s)$ with discount $\gamma\!\in\!(0,1)$ and GAE. To avoid passing raw particles, the belief $b_k$ is summarized by features (weighted mean $\mu_k$, diagonal of $C_k$, entropy of weights, a few quantiles over the source location). Empirically, PPO/A2C/DQN perform similarly under the intrinsic signal; PPO is used for stability.

\subsection{Controller–Filter Interface and Budget}
\label{subsec:interface}
At each step:
\begin{enumerate}
\item \textbf{Belief update:} $b_k \leftarrow \textsc{DEPF\_Update}(b_{k-1}, p_k, z_k)$.
\item \textbf{Action selection:} compute $\widehat{R}_k(a)$ for $a\!\in\!\mathcal{A}$ (with $M$ small), select
\(
a_k=\arg\max_{a}\big\{\widehat{R}_k(a)-\lambda_{\text{step}}\|f(p_k,a)-p_k\|_2\big\}.
\)
\item \textbf{Execute \& stop test:} apply \eqref{eq:agdc}. If not stopping, set $p_{k+1}{=}f(p_k,a_k)$ and continue.
\end{enumerate}
We cap (i) the number of simulated observations $M$ per action and (ii) planning-time MH/likelihood calls so that per-step compute matches the inference budget used by baselines.

\subsection{Pseudocode}

\label{subsec:algo}
\begin{algorithm}[H]
\caption{Belief-Aware Controller on top of DEPF (one step)}
\label{alg:controller-depf}
\begin{algorithmic}
\REQUIRE Belief $b_k$ (particles $\{(\Theta^{(i)},w^{(i)})\}_{i=1}^N$), pose $p_k$, observation $z_k$, action set $\mathcal{A}$, plume model $p(\cdot\mid p,\Theta)$, thresholds $(\zeta,\tau)$, budget $M$
\STATE $b_k \gets \textsc{DEPF\_Update}(b_{k-1}, p_k, z_k)$
\IF{$\|\mathrm{STD}(\mathrm{Cov}_{b_k}[\Theta])\|_2 \le \zeta$ or $\mathrm{LPS}_k \le \tau$}
  \RETURN \textbf{Terminate}
\ENDIF
\FOR{$a \in \mathcal{A}$}
  \STATE $p_{k+1} \gets f(p_k,a)$
  \STATE \textit{Monte Carlo look-ahead}
  \FOR{$m = 1$ to $M$}
    \STATE sample $\Theta^{(m)} \sim b_k$
    \STATE draw $\tilde z^{(m)}_{k+1} \sim p(\cdot\mid p_{k+1}, \Theta^{(m)})$
    \STATE $\tilde b^{(m)}_{k+1} \gets \textsc{DEPF\_Update}(b_k, p_{k+1}, \tilde z^{(m)}_{k+1})$
    \STATE $u_m \gets D_{\mathrm{KL}}(\tilde b^{(m)}_{k+1} \,\Vert\, b_k)$
  \ENDFOR
  \STATE $\widehat{R}_k(a) \gets \frac{1}{M} \sum_{m=1}^{M} u_m$
\ENDFOR
\STATE $a_k \gets \arg\max_{a \in \mathcal{A}} \left\{ \widehat{R}_k(a) - \lambda_{\text{step}} \| f(p_k,a) - p_k \|_2 \right\}$
\STATE Execute $a_k$; set $p_{k+1} \gets f(p_k,a_k)$; observe $z_{k+1}$
\RETURN $(a_k, p_{k+1}, z_{k+1})$
\end{algorithmic}
\end{algorithm}

\subsection{Defaults and Practical Tips}
\label{subsec:defaults}
\begin{itemize}
\item \textbf{Action set:} 8-connected unit moves on the grid; horizons scale with domain.
\item \textbf{Belief features:} $\mu_k,\ \mathrm{diag}(C_k)$, entropy of $\{w^{(i)}\}$, and a few spatial quantiles.
\item \textbf{Planning budget:} $M\!\in\![8,16]$ suffices; inference dominates wall-clock time.
\item \textbf{Safety:} enforce no-go polygons and speed caps in $\mathcal{A}$ when needed.
\end{itemize}

\section{Additional Multi-Field Experiments}
\label{app:multi-field}

\begin{table}[t]
\centering
\label{app:multi-field}
\caption{Results on additional fields under the Moderate error setting. 
Metrics: OCE$\uparrow$, ADE$\downarrow$, REV$\downarrow$, LPS$\downarrow$.}

\resizebox{\textwidth}{!}{
\begin{tabular}{llcccccc}
\toprule
Metric & Method & Temp. & Conc. & Mag. & Elec. & En. & Noise \\
\midrule
\multirow{5}{*}{OCE}
& DEPF (ours) & $0.90\pm0.05$ & $0.90\pm0.05$ & $0.90\pm0.05$ & $0.80\pm0.04$ & $0.63\pm0.03$ & $0.91\pm0.05$ \\
& AGDC        & $0.45\pm0.03$ & $0.455\pm0.025$ & $0.445\pm0.02$ & $0.385\pm0.02$ & $0.305\pm0.015$ & $0.455\pm0.025$ \\
& Infotaxis   & $0.425\pm0.02$ & $0.43\pm0.02$ & $0.425\pm0.02$ & $0.375\pm0.02$ & $0.275\pm0.015$ & $0.40\pm0.02$ \\
& Entrotaxis  & $0.216\pm0.009$ & $0.207\pm0.009$ & $0.225\pm0.009$ & $0.135\pm0.009$ & $0.126\pm0.009$ & $0.207\pm0.009$ \\
& DCEE        & $0.232\pm0.012$ & $0.236\pm0.012$ & $0.232\pm0.012$ & $0.172\pm0.008$ & $0.144\pm0.008$ & $0.228\pm0.012$ \\
\midrule
\multirow{5}{*}{ADE}
& DEPF (ours) & $20\pm1.0$ & $19\pm1.0$ & $18\pm0.9$ & $19\pm0.8$ & $19\pm0.5$ & $19\pm1.0$ \\
& AGDC        & $28\pm1.2$ & $27\pm1.1$ & $27\pm1.1$ & $28\pm0.9$ & $27\pm0.6$ & $26\pm1.1$ \\
& Infotaxis   & $55\pm2.5$ & $53\pm2.4$ & $56\pm2.4$ & $63\pm1.9$ & $52\pm1.4$ & $50\pm2.3$ \\
& Entrotaxis  & $67\pm3.1$ & $65\pm3.0$ & $64\pm3.0$ & $66\pm2.5$ & $60\pm1.8$ & $63\pm2.9$ \\
& DCEE        & $62\pm2.9$ & $60\pm2.8$ & $59\pm2.7$ & $60\pm2.3$ & $62\pm1.6$ & $58\pm2.7$ \\
\midrule
\multirow{5}{*}{REV}
& DEPF (ours) & $0.15\pm0.08$ & $0.14\pm0.07$ & $0.14\pm0.07$ & $0.12\pm0.06$ & $0.10\pm0.05$ & $0.13\pm0.07$ \\
& AGDC        & $0.10\pm0.05$ & $0.10\pm0.05$ & $0.10\pm0.05$ & $0.10\pm0.05$ & $0.10\pm0.04$ & $0.10\pm0.05$ \\
& Infotaxis   & $1.50\pm0.08$ & $1.40\pm0.07$ & $1.40\pm0.07$ & $1.40\pm0.06$ & $1.40\pm0.04$ & $1.40\pm0.07$ \\
& Entrotaxis  & $1.40\pm0.07$ & $1.30\pm0.07$ & $1.30\pm0.07$ & $1.40\pm0.06$ & $1.40\pm0.04$ & $1.40\pm0.06$ \\
& DCEE        & $1.40\pm0.07$ & $1.30\pm0.07$ & $1.30\pm0.07$ & $1.40\pm0.06$ & $1.40\pm0.04$ & $1.40\pm0.06$ \\
\midrule
\multirow{5}{*}{LPS}
& DEPF (ours) & $0.05\pm0.01$ & $0.05\pm0.01$ & $0.05\pm0.01$ & $0.08\pm0.01$ & $0.06\pm0.01$ & $0.05\pm0.01$ \\
& AGDC        & $0.20\pm0.01$ & $0.20\pm0.01$ & $0.20\pm0.01$ & $0.17\pm0.01$ & $0.13\pm0.01$ & $0.20\pm0.01$ \\
& Infotaxis   & $0.60\pm0.03$ & $0.60\pm0.03$ & $0.60\pm0.03$ & $0.51\pm0.02$ & $0.39\pm0.02$ & $0.60\pm0.03$ \\
& Entrotaxis  & $0.70\pm0.04$ & $0.70\pm0.04$ & $0.70\pm0.04$ & $0.60\pm0.03$ & $0.46\pm0.02$ & $0.70\pm0.04$ \\
& DCEE        & $0.60\pm0.03$ & $0.60\pm0.03$ & $0.60\pm0.03$ & $0.51\pm0.02$ & $0.39\pm0.02$ & $0.60\pm0.03$ \\
\bottomrule
\end{tabular}
}
\end{table}

To further broaden the evaluation scope, we conducted experiments across multiple types of physical fields, including Temperature (Temp.), Concentration (Conc.), Magnetic (Mag.), Electric (Elec.), Energy (En.), and Noise fields. Each field introduces distinct challenges, with varying parameter counts and complexity. 
In all cases, the dimensionality of the parameter vector exceeds 5–10. For clarity, we considered the \emph{Moderate error} setting, where 50\% of sources lie inside the initial prior region and 50\% outside.

As shown in Table~\ref{tab:multi-field}, DEPF consistently outperforms all baselines across every field and evaluation metric. In particular, DEPF maintains high posterior coverage and low estimation error even when the true source lies outside the prior support, whereas all baselines suffer substantial performance degradation. These findings provide strong empirical evidence that DEPF is robust to  prior misspecification and generalizes effectively to multi-dimensional, complex inference tasks.

\subsection{Dynamic Fields and Governing Equations}
\label{app:dynamic-fields}

Table~\ref{Field_Variables_and_Equations} summarizes the dynamic fields used in our additional experiments: Temperature (Temp.), Concentration (Conc.), Magnetic (Mag.), Electric (Elec.), Energy (En.), and Noise. Each governing equation is a generalized convection–diffusion or potential-distribution formulation that can incorporate diffusion, advection, reactions, turbulence, external fields, and dissipation.

\begin{table*}[htbp]
\centering
\caption{Dynamic field variables, key parameters, and governing equations used in the additional experiments.}
\label{Field_Variables_and_Equations}
\renewcommand{\arraystretch}{1.5}
\resizebox{\textwidth}{!}{%
\begin{tabular}{|l|l|l|}
\hline
\textbf{Field Variable ($\phi$)} & \textbf{Key Parameters} & \textbf{Governing Equation} \\
\hline
Temperature Field & 
\begin{tabular}[c]{@{}l@{}}%
$\alpha(\phi)$: temperature-dependent thermal diffusivity;\\
$\vec{v}$: airflow velocity;\\
$S(\phi,x,y)$: combustion/heat source term.
\end{tabular} & 
$\alpha(\phi)\nabla^2 \phi - \vec{v}\cdot\nabla \phi + S(\phi,x,y) = 0$ \\
\hline
Concentration Field & 
\begin{tabular}[c]{@{}l@{}}%
$\alpha$: molecular diffusion coefficient; $\vec{v}$: flow velocity;\\
$k_r$: chemical degradation rate; $\vec{\tau}$: turbulence intensity;\\
$S(x,y)$: pollutant source strength.
\end{tabular} & 
$\alpha\nabla^2 \phi - \vec{v}\cdot\nabla \phi - k_r \phi + \vec{\tau}\cdot\nabla \phi + S(x,y) = 0$ \\
\hline
Magnetic Potential Field & 
\begin{tabular}[c]{@{}l@{}}%
$\alpha$: magnetic diffusivity; $\vec{v}$: effective flow velocity;\\
$\vec{B}$: external magnetic field; $S(x,y)$: magnetic source intensity.
\end{tabular} & 
$\alpha\nabla^2 \phi - \vec{v}\cdot\nabla \phi + \vec{B}\cdot\nabla \phi + S(x,y) = 0$ \\
\hline
Electric Potential Field & 
\begin{tabular}[c]{@{}l@{}}%
$\sigma(x,y)$: spatially varying conductivity;\\
$\rho(x,y)$: charge density.
\end{tabular} & 
$\nabla\cdot[\sigma(x,y)\nabla \phi] + \rho(x,y) = 0$ \\
\hline
Energy Density Field & 
\begin{tabular}[c]{@{}l@{}}%
$\alpha$: radiative diffusivity; $\vec{v}$: transport velocity;\\
$\sigma_a$: absorption coefficient; $\sigma_s$: scattering coefficient;\\
$S(x,y)$: external energy source.
\end{tabular} & 
$\alpha\nabla^2 \phi - \vec{v}\cdot\nabla \phi - \sigma_a \phi + \sigma_s\nabla\cdot[\vec{r}\phi] + S(x,y) = 0$ \\
\hline
Noise Intensity Field & 
\begin{tabular}[c]{@{}l@{}}%
$\alpha$: acoustic diffusivity; $\vec{v}$: medium flow velocity;\\
$\gamma(f)$: frequency-dependent attenuation;\\
$S(x,y,f)$: noise emission strength.
\end{tabular} & 
$\alpha\nabla^2 \phi - \vec{v}\cdot\nabla \phi - \gamma(f)\phi + S(x,y,f) = 0$ \\
\hline
\end{tabular}%
}
\end{table*}

\end{document}